\documentclass[letterpaper, 10 pt, conference]{ieeeconf}  

\usepackage{booktabs}

\IEEEoverridecommandlockouts                              

\usepackage{setspace}
\usepackage{xcolor}
\usepackage[ruled,vlined, linesnumbered, noend]{algorithm2e}
\usepackage{amsmath}

\usepackage{amsthm}
\usepackage{amsfonts}
\usepackage{cite}
\usepackage{ragged2e}
\usepackage{optidef}
\usepackage{subcaption}
\usepackage{cleveref}
\usepackage{graphicx}
\usepackage{multirow}
\usepackage{url}
\usepackage{diagbox}
\usepackage[font=footnotesize]{caption}
\usepackage{subfig}
\usepackage{soul}
\usepackage{float}
\usepackage{resizegather}
\usepackage{makecell} 
\newtheorem{theorem}{Theorem}

\newtheorem{assumption}{\bf{Assumption}}

\usepackage{tabularx} 
\let\labelindent\relax
\usepackage{enumitem}
\usepackage{import}
\usepackage{amssymb}

\usepackage{tikz}
\usepackage{tkz-euclide}
\usetikzlibrary{shapes.geometric}

\DeclareMathOperator*{\argmin}{argmin} 
\DeclareMathOperator*{\argmax}{argmax}

\title{\bf PRO-SPECT: Probabilistically Safe Scalable Planning for Energy-Aware Coordinated UAV–UGV Teams in Stochastic Environments}
\author{Roger Fowler$^{*}$, Cahit Ikbal Er$^{*}$, Benjamin Johnsenberg, and Yasin Yaz{\i}c{\i}o\u{g}lu
\thanks{$^{*}$These authors contributed equally.}
\thanks{R. Fowler is with the Khoury Department of Computer Sciences at Northeastern University, Boston, MA and a Draper Scholar with The Charles Stark Draper Laboratory, Inc., Cambridge, MA.}
\thanks{C. I. Er is with the Department of Mechanical and Industrial Engineering at Northeastern University, Boston, MA.}
\thanks{B. Johnsenberg is with The Charles Stark Draper Laboratory, Inc., Cambridge, MA.}
\thanks{Y. Yaz{\i}c{\i}o\u{g}lu is with the Departments of Mechanical and Industrial Engineering and Electrical and Computer Engineering at Northeastern University, Boston, MA.}
\thanks {emails: fowler.ro@northeastern.edu, er.c@northeastern.edu, bcjohnson@draper.com, y.yazicioglu@northeastern.edu}
}

\begin{document}
\bstctlcite{bstctl:etal, bstctl:nodash, bstctl:simpurl}
\maketitle
\thispagestyle{empty}
\pagestyle{empty}

\justifying

\begin{abstract}
We consider energy-aware planning for an unmanned aerial vehicle (UAV) and unmanned ground vehicle (UGV) team operating in a stochastic environment. The UAV must visit a set of air points in minimum time while respecting energy constraints, relying on the UGV as a mobile charging station. Unlike prior work that assumed deterministic travel times or used fixed robustness margins, we model travel times as random variables and bound the probability of failure (energy depletion) across the entire mission to a user-specified risk level. We formulate the problem as a Mixed-Integer Program and propose PRO-SPECT, a polynomial-time algorithm that generates risk-bounded plans. The algorithm supports both offline planning and online re-planning, enabling the team to adapt to disturbances while preserving the risk bound. We provide theoretical results on solution feasibility and time complexity. We also demonstrate the performance of our method via numerical comparisons and simulations.
\end{abstract}

\section{Introduction}
Unmanned aerial vehicles (UAVs) are used in various domains such as agriculture, disaster response, and environmental monitoring (e.g., \cite{tokekar2016sensor, boccardo2015uav, manfreda2018use}). However, their limited batteries restrict long-horizon missions. Using unmanned ground vehicles (UGVs) as mobile charging stations addresses this limitation by providing a flexible alternative to building a fixed recharging infrastructure. Leveraging this flexibility requires coordinated planning of the vehicles.

Recently, energy-aware UAV-UGV planning problems have gained significant attention. In \cite{yu2018algorithms}, UAV landing locations on stationary or mobile UGVs are determined via a generalized traveling salesperson problem (GTSP) formulation. In \cite{cai2025energyawareroutingalgorithmmobile}, energy-constrained UAV-UGV routing is addressed with a TSP-guided Monte Carlo Tree Search. In \cite{ropero2019terra}, UAV-UGV trajectories are computed separately via a genetic algorithm and modified A*, with rendezvous locations determined through Voronoi tessellations. In \cite{maini2019coverage}, a two-stage heuristic leverages a road-network-constrained UGV as a refueling station for energy-aware UAV-UGV route planning. In \cite{yu2021rl}, a multi-agent deep reinforcement learning approach guides heterogeneous UAV-UGV teams based on environmental conditions. However, most of these approaches assume complete knowledge of the environment, including obstacle-free environments and known travel times.

In practice, disturbances such as wind, obstacles, or navigation errors can render a plan infeasible. In \cite{thelasingha2024iterative}, an iterative planning framework is proposed to improve solutions online as new information becomes available. In \cite{shi2022risk}, a chance-constrained Markov Decision Process formulation is used to handle stochastic UAV energy consumption with a probabilistic rendezvous policy. In \cite{albarakati2021multiobjective}, multiobjective optimization is applied to the similar problem of Autonomous Underwater Vehicle risk-aware path planning under stochastic water currents. In \cite{Lin2022, er2025rspectrobustscalableplanner}, scalable robust planning algorithms are proposed for persistent surveillance and aerial monitoring with energy-constrained  UAV-UGV teams. 

In this work, unlike prior approaches that rely on deterministic margins (e.g.,~\cite{er2025rspectrobustscalableplanner}), we instead 
optimize under a probabilistic safety requirement encoded as a joint chance constraint. This coupling fundamentally changes the problem structure and requires an approach to ensure global probabilistic safety guarantees. Our contributions are:
\begin{enumerate}
    \item A Mixed-Integer Program formulation for stochastic UAV–UGV energy-aware planning with a joint chance constraint encoding a global safety requirement~\eqref{eq:problem}.
   \item An integrated offline-online polynomial-time planning framework, PRO-SPECT (Alg.\ref{alg:offline_planning}), that computes risk-bounded offline solutions, and adapts to run-time disturbances online via a receding horizon approach.
    \item Formal guarantees on feasibility and worst-case time complexity of PRO-SPECT (Thm.\ref{th:feas_complex}).
    \item Validation via numerical comparisons against well-known optimization methods (Branch and Cut, Simulated Annealing), and related works \cite{ropero2019terra, er2025rspectrobustscalableplanner}; and ROS simulations with wind induced uncertainty.
\end{enumerate}

\section{Problem Formulation}
\label{sec:problem_formulation}
We consider an aerial monitoring mission in a stochastic environment, where a UAV is supported by a UGV (mobile charging station) that transports the UAV through take-off/landing sites. Let $\mathbb{R}$ and $\mathbb{R}_{\geq 0}$ denote the real numbers and non-negative real numbers. Let the environment be:
\begin{equation}
\label{eq:env}
    \mathcal{Q}{:}{=}\{(x,y,z) \in \mathbb{R}^{3} \mid 0 {\leq} x {\leq} \bar{x},\,0 {\leq} y {\leq} \bar{y},\,
 0 {\leq} z {\leq} \bar{z}\},
\end{equation}
containing a feasible space $\mathcal{Q}_f {\subseteq} \mathcal{Q}$. Let $\mathcal{Q}^a_f$ and $\mathcal{Q}^g_f$ denote the feasible points on air and ground, i.e., $\mathcal{Q}^a_f{=}\mathcal{Q}_f {\setminus} \mathcal{Q}^g_f$, $\mathcal{Q}^a_g{=}\mathcal{Q}_f {\setminus} \mathcal{Q}^g_a$.\footnote{Both \( \mathcal{Q}_f \) and \( \mathcal{Q}_f^g \) are connected, i.e., a feasible path exists between any pair of points within these sets.} Let $\ell{:}\mathcal{Q}_f {\times} \mathcal{Q}_f {\rightarrow} \mathbb{R}_{\geq 0}$ be a distance function (shortest path length) between any two points in $\mathcal{Q}_f$. The mission goal is for the UAV-UGV team to visit a set of $n$ aerial points, $\mathcal{P}_{\text{UAV}}{=} \{x_1, \dots, x_n\} {\subset} \mathcal{Q}_f^a$. The team starts from initial position $x_{0} {\in} \mathcal{Q}_f^g$, and must reach the final position $x_{\text{f}} {\in} \mathcal{Q}_f^g$ while ensuring every point in $\mathcal{P}_{\text{UAV}}$ is visited without violation of the UAV's energy constraint, while minimizing mission time.

The UAV cannot continuously fly longer than a specific duration $\bar{\tau}_a$ due to its limited energy.\footnote{For UAVs with near constant power draw, energy depletion can be approximated by flight time.} To this end, we consider an approach where the UGV carries the UAV through a sequence of points, which are \textit{release} and \textit{collect} points. The UAV takes off at a release point, samples a subset of $\mathcal{P}_{\text{UAV}}$, and lands on the UGV at a collect point to be charged and transferred to the next release point. With at most $n{+}2$ points per tour\footnote{Clearly, a solution also cannot be optimal unless the UAV visits at least one point from $\mathcal{P}_{\text{UAV}}$ in every tour (i.e. number of tours $\leq n$).} (release, collect, and $n$ aerial points), we encode the mission plan as:
\begin{equation*}
    {X} = \begin{bmatrix}
        {X}_{1,1} & \cdots & {X}_{1,n+2} \\
        \vdots & \ddots & \vdots \\
        {X}_{n,1} & \cdots & {X}_{n,n+2}
    \end{bmatrix},
\end{equation*}
where  \({X}_{i,j} {\in} \mathcal{Q}_f\) denotes an air point if $ j {\in} \{2, \hdots ,n{+}1\}$, a release point if $j{=}1$, and a collect point if $j{=}n{+}2$. Each row of $X$ (denoted as $X_{i}$) encodes a tour that involves the UAV traveling through the sequence of waypoints in that row via shortest paths and meeting with the UGV at the collect point. Thus, any plan with fewer than $n$ tours can be encoded with the remaining rows as ${X}_{i,1}{=}{X}_{i,2}{=}\hdots{=}{X}_{i,n{+}2}$ i.e., zero length, trivial tours. After each tour is executed, the UGV transfers the UAV from ${X}_{i,n+2}$ (collect point) to ${X}_{i+1,1}$ (next release point), or $x_{\text{f}}$ if $i{=}n$, while recharging the UAV. For the team, the total time to execute the plan $X$ is
\begin{equation}
\small
\begin{aligned}
\tau(X) &= \tau_g(x_{0}, X_{1,1}) + \tau_g(X_{n,n+2}, x_{\text{f}}) \\
&\quad + \sum_{i=1}^{n} \max(\tau_a(X_{i}), \tau_g(X_{i,1}, X_{i,n+2})) \\
&\quad + \sum_{i=1}^{n-1} \max(\tau_g(X_{i,n+2}, X_{i+1,1}), \tau_c(X_{i})),
\end{aligned}
\label{eq:objective_function}
\end{equation}
where $\tau_a$ and $\tau_g$ denote UAV tour time and UGV travel time respectively, and $\tau_a {=} \tau_g {=} 0$ for zero-length paths. Here, $\tau_a(X_i)$ is the UAV time for tour $i$ from release point $X_{i,1}$ through points $X_{i,2}, \ldots, X_{i,n+1}$ to collect point $X_{i,n+2}$. The terms $\tau_g(x_{0}, X_{1,1})$ and $\tau_g(X_{n,n+2}, x_{\text{f}})$ are the UGV times from start to first release and from final collect to end, respectively. The term $\tau_g(X_{i,n+2}, X_{i+1,1})$ is the UGV travel time between consecutive tours. The recharge time after tour $i$ is $\tau_c(X_i) {\geq} 0$, ensuring the UAV begins each tour fully charged.\footnote{Battery swapping can be modeled by setting $\tau_c(X_i)$ to a small constant.}

Different from earlier works such as \cite{er2025rspectrobustscalableplanner} that consider deterministic travel times (energy consumption), we model stochastic travel times and impose probabilistic guarantees on safety, i.e., the UAV never has to fly longer than $\bar{\tau}_a$. More specifically, we consider the problem of minimizing the expected time to complete the mission while achieving a desired bound on the probability of failure as in \eqref{eq:risk} where $p_r$ is a user-specified acceptable risk level, i.e.,
\begin{subequations}\label{eq:problem}
\begin{align}
\min_X \quad & \mathbb{E}[\tau(X)] \label{eq:objective},\\
\text{s.t.} \quad & \forall x_k \in \mathcal{P}_{\text{UAV}}, \quad \exists X_{i,j} = x_k, \label{eq:coverage}\\
& P\left(\tau_a(X_i), \tau_g(X_{i,1}, X_{i,n+2}) \leq \bar{\tau}_a, \; \forall i\right) \geq 1 - p_r, \label{eq:risk}\\
& X_{i,1}, X_{i,n+2} \in \mathcal{Q}_f^g, \quad \forall i, \label{eq:ground}\\
& X_{i,j} \in \mathcal{P}_{\text{UAV}} \cup \{X_{i,j-1}\}, \quad \forall j \in \{2,\ldots,n+1\}, \label{eq:air}
\end{align}
\end{subequations}
where \eqref{eq:objective} minimizes expected mission completion time, \eqref{eq:coverage} ensures all points in $\mathcal{P}_{\text{UAV}}$ are visited, \eqref{eq:risk} bounds the probability of failure across all tours by user-specified $p_r$,\footnote{If no feasible solution exists when each air point comprises its own tour, then $\bar{\tau}_a$ is insufficient to reach some air points, and we assume the problem admits no feasible solution.} \eqref{eq:ground} constrains release/collect points to the feasible ground, and \eqref{eq:air} allows only release/collect points or their immediate repetitions to not be members of $\mathcal{P}_{\text{UAV}}$.

\section{Proposed Solution}
\label{sec:proposed_solution}

\subsection{Environmental Model}
To model uncertainty, we define mean and variance functions to estimate travel times. We consider only the first two cumulants, as higher-order moments are typically unavailable or unreliable in practical environmental models. Let $t_a(u,v) \sim T_a(u,v)$ denote the UAV travel time between two points, analogous to $\tau_g$. We define the UAV environmental model as:
\begin{equation}
M_{\theta,a}(u, v) \equiv \mathbb{E}[t_a(u, v)], \quad
V_{\theta,a}(u, v) \equiv \text{Var}[t_a(u, v)],
\label{eq:uav_model}
\end{equation}
where $M_{\theta,a}(u, v)$ and $V_{\theta,a}(u, v)$ denote the expected travel time and its variance for the UAV between points $u$ and $v$, respectively. For the UGV:
\begin{equation}
M_{\theta,g}(u, v) \equiv \mathbb{E}[\tau_g(u, v)], \quad
V_{\theta,g}(u, v) \equiv \text{Var}[\tau_g(u, v)],
\label{eq:ugv_model}
\end{equation}
where $M_{\theta,g}(u, v)$ and $V_{\theta,g}(u, v)$ denote the expected travel time and its variance for the UGV between points $u$ and $v$, respectively. The parameter vector $\theta$ captures environmental variables such as wind, obstacles, and navigation uncertainty. Note that the approach is otherwise agnostic to the underlying noise model and user preferences. 

\begin{assumption}
\label{as:edges}
All travel time distributions are bounded, and travel times between
different pairs of points, both in the air and on the ground, are mutually
independent.
\end{assumption}

From Assumption~\ref{as:edges}, $M_{\theta,*}$ and $V_{\theta,*}$ depend only on endpoints, allowing the functions to be evaluated and cached per \textit{pair of points (edges).} This enables the total tour statistics to decompose as sums over edges. For tour $i$ with waypoints $X_{i,1}, \hdots, X_{i,n+2}$:
\begin{equation}
\begin{aligned}
\hat{\mu}_{a,i} &= \sum_{j=1}^{n+1} M_{\theta,a}(X_{i,j}, X_{i,j+1}), &
\hat{\sigma}^2_{a,i} &= \sum_{j=1}^{n+1} V_{\theta,a}(X_{i,j}, X_{i,j+1}), \\
\hat{\mu}_{g,i} &= M_{\theta,g}(X_{i,1}, X_{i,n+2}), &
\hat{\sigma}^2_{g,i} &= V_{\theta,g}(X_{i,1}, X_{i,n+2}).
\end{aligned}
\label{eq:time_propagation}
\end{equation}

\subsection{Probabilistic Constraint Evaluation}
\label{subsec:constraint_eval}
We evaluate the risk constraint through Gaussian surrogates, without assuming the
true travel time distributions are Gaussian. For each tour $i$, let $f_{a,i}$ and $f_{g,i}$ denote the distributions of the respective UAV and UGV travel times. Due to Assumption~\ref{as:edges}, $f_{a,i}$ and $f_{g,i}$ both have bounded support. For each of them, we can define a Gaussian surrogate with the same mean and variance, i.e.,  $\varphi_{a,i}=\mathcal{N}(\hat{\mu}_{a,i},\hat{\sigma}^2_{a,i})$ and $\varphi_{g,i}=\mathcal{N}(\hat{\mu}_{g,i},\hat{\sigma}^2_{g,i})$. Since every $f_{a,i}$ and $f_{g,i}$ have bounded support, there exist some finite $\bar{t}_{a,i}$ and $\bar{t}_{g,i}$ beyond which the upper tail of each original distribution is dominated by that of its Gaussian surrogate, i.e.,
\begin{align}
\label{eq:tail_domination}
\int_{t_{*,i}}^{\infty} f_{*,i}(s)\,ds &\leq \int_{t_{*,i}}^{\infty} \varphi_{*,i}(s)\,ds, \quad \forall t_{*,i} \geq \bar{t}_{*,i}, 
\end{align}

For any $p\in[0,1]$, let $\Theta_{a,i}(p)$ and $\Theta_{g,i}(p)$ denote points beyond which the respective Gaussian functions have mass equal to $p$, i.e., 
\begin{equation}
\int_{\Theta_{g,i}(p)}^{\infty} \varphi_{g,i}(s)\,ds = \int_{\Theta_{a,i}(p)}^{\infty} \varphi_{a,i}(s)\,ds = p.
\label{eq:quantile_def}
\end{equation}
Note that all $\Theta_{g,i}(p)$ and $\Theta_{a,i}(p)$ are monotonically decreasing functions, i.e., as $p$ increases the point becomes smaller. Now, let $\bar{p}_r \in [0,1]$ be defined as: 
\begin{equation}
\bar{p}_r= \max\{p \in [0,1] \mid \Theta_{g,i}(p)\geq \bar{t}_{g,i}, \Theta_{a,i}(p)\geq \bar{t}_{a,i}, \forall i\}.
\label{eq:prbar}
\end{equation}

For any $p \leq \bar{p}_r$, the corresponding surrogate tails dominate the true tails, i.e., 
\begin{align}
\label{eq:tail_domination3}
\int_{\Theta_{*,i}(p)}^{\infty} f_{*,i}(s)\,ds &\leq \int_{\Theta_{*,i}(p)}^{\infty} \varphi_{*,i}(s)\,ds, 
\end{align}

Accordingly, satisfaction of the risk bound~\eqref{eq:risk} under the surrogate implies
satisfaction under the true distribution:
\begin{subequations}
\label{eq:survival_true}
\begin{align}
P\!\left(\tau_a(X_i) \leq \bar{\tau}_a\right) &= 1 - \int_{\bar{\tau}_a}^{\infty} f_{a,i}(s)\,ds, \label{eq:survival_true_a}\\
P\!\left(\tau_g(X_{i,1}, X_{i,n+2}) \leq \bar{\tau}_a\right) &= 1 - \int_{\bar{\tau}_a}^{\infty} f_{g,i}(s)\,ds. \label{eq:survival_true_g}
\end{align}
\end{subequations}

Accordingly, for any $p \leq \bar{p}_r$
\begin{align}
\label{eq:conservatism}
\int_{\bar{\tau}_a}^{\infty} \varphi_{*,i}(s)\,ds \leq p
&\;\implies\;
\int_{\bar{\tau}_a}^{\infty} f_{*,i}(s)\,ds \leq p, 
\end{align}

By \eqref{eq:conservatism}, it suffices to control the surrogate tails:
whenever the Gaussian mass beyond $\bar{\tau}_a$ is at most $p$, the true
mass beyond $\bar{\tau}_a$ is at most $p$ as well.

Using \eqref{eq:survival_true} and \eqref{eq:conservatism}, for any $p \leq \bar{p}_r$,
\begin{subequations}
\label{eq:surrogate_to_true}
\begin{small}
\begin{align}
\int_{\bar{\tau}_a}^{\infty} \varphi_{a,i}(s)\,ds \leq p
&\implies
P\!\left(\tau_a(X_i) \leq \bar{\tau}_a\right) \geq 1 - p,
\label{eq:surrogate_to_true_a}\\
\int_{\bar{\tau}_a}^{\infty} \varphi_{g,i}(s)\,ds \leq p
&\implies
P\!\left(\tau_g(X_{i,1},X_{i,n+2}) \leq \bar{\tau}_a\right) \geq 1 - p.
\label{eq:surrogate_to_true_g}
\end{align}
\end{small}
\end{subequations}

The inequalities on the left side of \eqref{eq:surrogate_to_true} can be checked easily since the surrogate tail is available in closed form
\begin{equation}
    \int_{\bar{\tau}_a}^{\infty}\varphi_{*,i}(s)\,ds
= 1 - \Phi \left(\frac{\bar{\tau}_a - \hat{\mu}_{*,i}}{\hat{\sigma}_{*,i}}\right),
\end{equation}
where $\Phi$ denotes the cumulative density function of the Gaussian:
\begin{equation}
\Phi\!\left(\frac{\bar{\tau}_a - \hat{\mu}_{*,i}}{\hat{\sigma}_{*,i}}\right)
= \frac{1}{2}\!\left(1 + \operatorname{erf}\!\left(\frac{\bar{\tau}_a - \hat{\mu}_{*,i}}{\hat{\sigma}_{*,i}\sqrt{2}}\right)\right),
\label{eq:gaussian_cdf}
\end{equation}
where $\text{erf}$ is the error function.
We define $p_{s,a}$ and $p_{s,g}$ as the probability of each agent's respective Gaussian surrogate not exceeding $\bar{\tau}_a$ during tour $i$:
\begin{align}
p_{s,a}(X_i) &= \Phi\!\left(\frac{\bar{\tau}_a - \hat{\mu}_{a,i}}{\sqrt{\hat{\sigma}^2_{a,i}}}\right) = 1 - \int_{\bar{\tau}_a}^{\infty} \varphi_{a,i}(s)\,ds, \label{eq:survival_uav}\\
p_{s,g}(X_{i,1}, X_{i,n+2}) &= \Phi\!\left(\frac{\bar{\tau}_a - \hat{\mu}_{g,i}}{\sqrt{\hat{\sigma}^2_{g,i}}}\right) = 1 - \int_{\bar{\tau}_a}^{\infty} \varphi_{g,i}(s)\,ds. \label{eq:survival_ugv}
\end{align}
Algorithm \ref{alg:offline_planning} will use these surrogate success probabilities to design feasible tours.

\subsection{Tour Construction}
The key challenge of \eqref{eq:problem} is the constraint \eqref{eq:risk}. Unlike \cite{er2025rspectrobustscalableplanner}, where constraints could be applied per tour, all tours are now coupled through a joint probability bound. As a result, tour boundaries and release/collect points must be selected jointly across the mission, rather than per tour. With the mutual independence of travel times from Assumption~\ref{as:edges}, the mission success probability based on Gaussian surrogates equals $\prod_{i=1}^m p_{s,i}$. Our approach is to enforce \eqref{eq:risk} by constructing tours satisfying
\begin{equation}
    \prod_{i=1}^m p_{s,i} \geq 1 - p_r,
    \label{eq:mission_failure_risk}
\end{equation}
where $p_{s,i}=p_{s,a}(X_i){\cdot}p_{s,g}(X_{i,1}, X_{i,n+2})$ is the surrogate tour success probability and $m$ is the number of tours. To make the joint optimization more tractable, we decouple the problem by fixing the visit order. To this end, we solve a TSP on $\mathcal{P}_{\text{UAV}}$ using edge costs from $M_{\theta,a}$ to obtain an ordered sequence $\mathcal{S}$.\footnote{If $M_{\theta,*}(u, v) {=} M_{\theta,*}(v, u)$ and $V_{\theta,*}(u, v) {=} V_{\theta,*}(v, u)$ the environmental model is \textit{symmetric}. If the environmental model is not \textit{symmetric}, we consider an asymmetric TSP (ATSP) without loss of generality.} Subsequent optimization focuses on selecting tour boundaries and release/collect points. We represent tour partitioning by a set of cut indices $K {=} \{k_1,\ldots,k_{m-1}\}$, partitioning $\mathcal{S}$ into tours:
\begin{equation}
    \mathcal{T}_{i} = \begin{cases}
        \mathcal{S}_{1 : k_{i} - 1} & \textbf{if } i = 1 \\
        \mathcal{S}_{k_{i-1} : n} & \textbf{if } i = m \\
        \mathcal{S}_{k_{i-1} : k_{i} - 1 } & \textbf{otherwise}
    \end{cases}
\end{equation}

Each tour (row) includes a release point $r_i$ (visited first), a collect point $c_i$ (visited last), and their ground projections, with $c_i$ repeated such that $|X_{i,:}| = n+2$:
\begin{equation}
    X_{i,:} = ( \pi(r_i), r_i, \mathcal{T}_{i} \setminus \{r_i,c_i\}, c_i^{\times(n-|\mathcal{T}_i|+1)}, \pi(c_i) ),
\end{equation}
where $\pi : Q_f^a \rightarrow Q_f^g$ denotes a projection function that maps an aerial point to a feasible ground point. Specifically, $\pi(u)$ returns the closest feasible ground point to $u$, defined as $\pi(u) = \argmin_{v \in Q_f^g} \, \ell(u,v).$ From independence (Assumption~\ref{as:edges}) surrogate tour success probability combines UAV and UGV success:
\begin{equation}
    p_{s,i} = p_{s,a}[(\pi(r_i), r_i, \mathcal{T}_{i} \setminus \{r_i,c_i\}, c_i, \pi(c_i))] \cdot p_{s,g}[(\pi(r_i), \pi(c_i))].
    \label{eq:tour_prob_success}
\end{equation}

We then pose the tour construction problem by maximizing the product of probabilities, equivalent to maximizing the sum of log-probabilities since $p_{s,i} \in [0,1]$:
\begin{equation}
    \max_{K,R,C} \sum_{i=1}^m \log(p_{s,i}),
    \label{eq:prob_success}
\end{equation}
where $R{{=}\{r_1,{\dots},r_m\}}$ and  $C{{=}\{c_1,{\dots},c_m\}}$ are the sets of release and collect points, respectively. Then, \eqref{eq:prob_success} is solved for the minimum $m$ satisfying \eqref{eq:mission_failure_risk}.

\subsection{Dynamic Program}
We solve \eqref{eq:prob_success} with dynamic programming, building up the tour partition incrementally: an $m$-tour solution is constructed by extending an $(m{-}1)$-tour solution with one additional tour, sweeping over all feasible cut indices and release/collect point assignments. 
We define $D_{j,m}$ as the maximum log-probability achievable by partitioning $S_{1:j}$ into $m$ tours. The function $f(a,b,r,c)$ returns the log-probability contribution of a single tour segment $\mathcal{S}_{a:b}$ with release index $r$ and collect index $c$:
\begin{equation}
    \begin{split}
f(a,b,r,c) &= \log(p_{s,a}[(\pi(\mathcal{S}_r), \mathcal{S}_r, \mathcal{S}_{(a:b) \setminus \{r,c\}}, \mathcal{S}_c, \pi(\mathcal{S}_c))]) \\
&\quad+ \log(p_{s,g}[(\pi(\mathcal{S}_r), \pi(\mathcal{S}_c))]).
    \end{split}
    \label{eq:log_prob_success}
\end{equation}

Using the definition of $D_{j,m}$ and the segment cost function $f$, the recurrence relation is:
\begin{equation}
    D_{j,m} = \left\{\begin{array}{clc}
        \displaystyle\max_{r_m, c_m \in \{1:j\}} & f(1,j,r_m,c_m) & \textbf{if   } m = 1 \\[10pt]
        \displaystyle\max_{\substack{k_{m-1} \in \{m+1:j-1\}\\ r_m, c_m \in \{k_{m-1}:j\}}} &
        \begin{split}
            & D_{k_{m-1}-1, m-1} \\ & + f(k_{m-1},j,r_m,c_m) 
        \end{split}
         & \textbf{if   } m > 1
    \end{array}\right.
    \label{eq:dynamic_program}
\end{equation}

We sweep $m$ from 1 to $n$ until $D_{n,m} {\geq} \log(1{-}p_r)$, storing optimal $k_{m-1}, r_m, c_m$ for each $(j,m)$ to reconstruct $K,R,C$.

When the given travel time model is symmetric, i.e., $f(a,b,r,c) {=} f(a,b,c,r)$, we choose $r {\leq} c$ to ensure the release point does not succeed the collect point in $\mathcal{S}$, respecting the TSP ordering and likely minimizing UGV inter-tour time.

\subsection{Offline Planning}
\label{sec:offline}
Offline planning considers problem \eqref{eq:problem} in full. Some algorithmic inputs are notated with tildes as they may differ during online planning. For offline planning, inputs follow as: UAV points $\tilde{\mathcal{P}}_\text{UAV} {=} \mathcal{P}_\text{UAV}$, initial positions $\tilde{x}_{0,a} {=} \tilde{x}_{0,g} {=} x_0$, flight time $\tilde{\tau}_0 {=} 0$, final position $\tilde{x}_f {=} x_f$, and allowable risk $\tilde{p}_r {=} p_r$. Output mission plan $\tilde{X}$ is $n\times n+2$ as required.

\subsection{Online Re-planning}
\label{sec:online}
Offline planning assumes agents start co-located with full battery. Online re-planning relaxes these assumptions with a receding horizon approach, re-solving a subset of the offline problem at tour boundaries with an adjusted risk budget to preserve the global guarantee \eqref{eq:mission_failure_risk}.

\subsubsection{Re-planning Between Tours}
When the UAV and UGV are together before tour $i$, we have current UAV position $\tilde{x}_{0,a}$ and UGV position $\tilde{x}_{0,g}$ satisfying $\tilde{x}_{0,a} = \tilde{x}_{0,g}$, but the UAV can have a residual charge deficit $\tilde{\tau}_0 {\geq} 0$ from the previous tour, modifying the recharge time before the next tour. To re-plan the next $\tilde{m}$ tours:

\begin{enumerate}[label=\roman*)]
    \item Select planning horizon: $\tilde{m} \in \{0:m-i\}$
    \item Solve with modified inputs:
    \begin{itemize}
        \item $\tilde{\mathcal{P}}_{\text{UAV}} = \bigcup_{k=i}^{i+\tilde{m}} \{X_{k,2}, \ldots, X_{k,n+1}\}$
        \item $\tilde{x}_{0,a} = \tilde{x}_{0,g} = X_{i,2}, \quad \tilde{x}_f = \pi(X_{i+\tilde{m}, n+1})$
        \item $\tilde{p}_r = 1 - \exp\left( \log(1-p_r) - \sum_{j=i+\tilde{m}+1}^{m} \log p_{s,j} \right)$
    \end{itemize}
    \item Update plan: $X_{i:i+\tilde{m}, :} = \tilde{X}$ 
\end{enumerate}

Since the risk budget $\tilde{p}_r$ is derived from \eqref{eq:mission_failure_risk} with the success probabilities of tours beyond the horizon, the bound is preserved regardless of when re-planning occurs.

\subsubsection{Re-planning During a Tour}

Re-planning can occur at any time during execution. When the UAV and UGV are not together during tour $i$, we have current UAV position $\tilde{x}_{0,a} {\neq} \tilde{x}_{0,g}$ with elapsed flight time $\tilde{\tau}_0 {\geq} 0$, reducing the remaining energy budget for the current tour. $\tilde{m}$, $\tilde{p}_r$, and $\tilde{x}_f$ are the same as above. If $\tilde j$ is the next index in the tour, we have $\tilde{\mathcal{P}}_{\text{UAV}} = \{X_{k,\tilde j}, \ldots, X_{k,n+1}\} \cup \bigcup_{k=i+1}^{i+\tilde{m}} \{X_{k,2}, \ldots, X_{k,n+1}\}$.

We modify the DP to handle the partial tour by replacing $f$ with $f'$, where the release point has already passed:
\begin{equation}
    \begin{split}
    f'(1,b,\cdot,c) &= \log(p'_{s,a}[(\tilde{x}_{0,a}, \mathcal{S}_{(1:b) \setminus \{c\}}, \mathcal{S}_c, \pi(\mathcal{S}_c))]) \\
    &\quad+ \log(p'_{s,g}[(\tilde{x}_{0,g}, \pi(\mathcal{S}_c))]),
    \end{split}
    \label{eq:log_prob_success_online}
\end{equation}
and update success probabilities to account for elapsed time:
\begin{equation}
    p'_{s,a} = \Phi \left(\frac{\bar{\tau}_a - \tilde{\tau}_0 - \hat{\mu}_{a,1}}{\sqrt{\hat{\sigma}^2_{a,1}}}\right), \quad
    p'_{s,g} = \Phi \left(\frac{\bar{\tau}_a - \tilde{\tau}_0 - \hat{\mu}_{g,1}}{\sqrt{\hat{\sigma}^2_{g,1}}}\right),
    \label{eq:energy_constraints_online}
\end{equation}
where $(\hat{\mu}_{*,1}, \hat{\sigma}^2_{*,1})$ are summed only over remaining points.

\subsection{Proposed Algorithm}
We propose PRO-SPECT (Alg.~\ref{alg:offline_planning}), which returns a feasible solution to \eqref{eq:problem} with a polynomial-time worst-case complexity. A visualization of the algorithm is provided in Fig.\ref{fig:algorithm_steps}.

\SetKwComment{Comment}{//}{}

\begin{algorithm}[h!]
\renewcommand{\AlCapSty}[1]{\normalfont\footnotesize{\textbf{#1}}\unskip}\footnotesize
\SetAlgoLined
\SetAlgoNoEnd
\DontPrintSemicolon
\SetKwInOut{Input}{Input}
\SetKwInOut{Output}{Output}

\Input{
    $\tilde{\mathcal{P}}_{\text{UAV}}$ (Set of $\tilde{n}$ points to visit), 
    $n$ (Size of full $\mathcal{P}_\text{UAV}$),
    $\mathcal{Q}_f^g$ (Feasible Ground),
    $\mathcal{Q}_f^a$ (Feasible Air), 
    $\tilde{x}_{0,a}$ (UAV start position), $\tilde{\tau}_0$ (UAV initial flight time), 
    $\tilde{x}_{0,g}$ (UGV start position), $\tilde{x}_\text{f}$ (Final position), 
    $\bar{\tau}_a$ (Maximum flight time), 
    $\tilde{p}_r$ (User-specified risk parameter),
    $M_{\theta,a},V_{\theta,a}$, 
    $C_{\theta,a}$,
    $M_{\theta,g},V_{\theta,g}$ (UAV and UGV travel time functions)
}
\Output{
    ${\tilde{X}}$ (Mission Plan)
}
\SetKw{continue}{continue}

\caption{\scriptsize Probabilistically Safe UAV-UGV Planning (PRO-SPECT)}
\label{alg:offline_planning}

\textbf{Step 1: Finding an approximate solution $\tilde{\mathcal{S}}$ to TSP on $\tilde{\mathcal{P}}_{\text{UAV}}$}

Use an approximate TSP solver \cite{christofides2022worst}\footnotemark on $\tilde{\mathcal{P}}_{\text{UAV}}$ to find $\tilde{\mathcal{S}}$ such that the edge cost is calculated with $M_{\theta,a}$ and \;
$\tilde{\mathcal{S}}_1 = \underset{x \in \mathcal{P}_{\text{UAV}}}{\argmin} \, M_{\theta,a}(\tilde{x}_{0,a}, x), \quad \tilde{\mathcal{S}}_{\tilde{n}} = \underset{x \in \mathcal{P}_{\text{UAV}}}{\argmin} \, M_{\theta,a}(x, \tilde{x}_\text{f})$

\textbf{Step 2: Creating feasible UAV tours from $\tilde{\mathcal{S}}$}

\textbf{Initialize:} $\tilde{X}, D, K, R, C \leftarrow \emptyset$, \textsc{ValidPlanFlag} $\leftarrow$ False\;
$\tilde{n} \leftarrow |\tilde{\mathcal{P}}_{\text{UAV}}|$\;

\For{$m \leftarrow 1$ \KwTo $\tilde{n}$} {
    \For{$j \leftarrow m$ \KwTo $\tilde{n}$} {
        \If{$\tilde{x}_{0,a} \neq \tilde{x}_{0,g}$ \textbf{and} $m=1$} {
            $\hat c \leftarrow \underset{c \in \{1 : j\}}{\argmax} \quad f'(1, j, \cdot, c)$\;
            $D_{j,m} \leftarrow f'(1, j, \cdot, \hat c)$\;
            $K_{j,m} \leftarrow -1$, $R_{j,m} \leftarrow -1$, $C_{j,m} \leftarrow \hat c$\;
        } \ElseIf {$m=1$} {
            $\hat r, \hat c \leftarrow \underset{r, c \in \{1 : j\}}{\argmax} \quad f(1, j, r, c)$\;
            $D_{j,m} \leftarrow f(1, j, \hat r, \hat c)$\;
            $K_{j,m} \leftarrow -1$, $R_{j,m} \leftarrow \hat r$, $C_{j,m} \leftarrow \hat c$\;
        } \Else {
            $\hat k, \hat r, \hat c \leftarrow \underset{\substack{k \in \{m+1:j-1\} \\ r, c \in \{1:j\}}}{\argmax} \quad D_{k-1, m-1} + f(k, j,r,c)$\;
            $D_{j,m} \leftarrow D_{\hat k-1, m-1} + f(\hat{k}, j, \hat r, \hat c)$\;
            $K_{j,m} \leftarrow \hat k$, $R_{j,m} \leftarrow \hat r$, $C_{j,m} \leftarrow \hat c$\;
        }
    }
    \If{$D_{\tilde{n},m} \geq \log(1 - p_r)$} {
        \textsc{ValidPlanFlag} $\leftarrow$ True\;
        \textbf{break}\;
    }
}

\If{\textbf{not} \textsc{ValidPlanFlag}} {
    \Return{$\emptyset$} \;
}
$\hat n \leftarrow \tilde{n}$, $\hat m \leftarrow m$ \;
\While{$\hat m > 1$} {
    $k_{\hat m} \leftarrow K_{\hat n, \hat m}$, $r_{\hat m} \leftarrow R_{\hat n, \hat m}$, $c_{\hat m} \leftarrow C_{\hat n, \hat m}$\;
    $\tilde{X}_{\hat{m},:} \leftarrow (\,\pi( \tilde{\mathcal{S}}_{r_{\hat m}} ),\, \tilde{\mathcal{S}}_{r_{\hat m}}, \, \tilde{\mathcal{S}}_{(k_{\hat m} : \hat n) \setminus \{r_{\hat m},c_{\hat m}\}},$
    $\tilde{\mathcal{S}}_{c_{\hat m}}^{\times (n - \hat n + k_{\hat m} - 1)}, \, \pi(\tilde{\mathcal{S}}_{c_{\hat m}}) \,)$\;
    $\hat n \leftarrow k_{\hat m} - 1$, $\hat m \leftarrow \hat m - 1$ \;
}
$r_1 \leftarrow R_{\hat n, 1}$, $c_1 \leftarrow C_{\hat n, 1}$\;
\If{$x_{0,a} \neq x_{0,g}$} {
    $\tilde{X}_{1,:} \leftarrow ( \,x_{0,g}, x_{0,a}, \, \tilde{\mathcal{S}}_{(1 : \hat n) \setminus \{c_1\}}, \, \tilde{\mathcal{S}}_{c_1}^{\times (n - \hat n)},\, \pi(\tilde{\mathcal{S}}_{c_1}) \,)$\;
} \Else {
    $\tilde{X}_{1,:} \leftarrow (\, \pi(\tilde{\mathcal{S}}_{r_1}),\, \tilde{\mathcal{S}}_{r_1}, \, \tilde{\mathcal{S}}_{(1 : \hat n) \setminus \{r_1,c_1\}}, \, \tilde{\mathcal{S}}_{c_1}^{\times (n - \hat n)},\, \pi(\tilde{\mathcal{S}}_{c_1}) \,)$\;
}
$\tilde{X}_{m+1:n,:} \leftarrow \tilde{x}_\text{f}$\;
\Return{$\tilde{X}$}
\end{algorithm}
\footnotetext{We can use any other TSP solver in these steps. However, this might change the worst-case time complexity of Alg.~\ref{alg:offline_planning}.}

\begin{figure}[htbp]
\centering
\begin{subfigure}[t]{0.3\linewidth}
    \includegraphics[width=\textwidth]{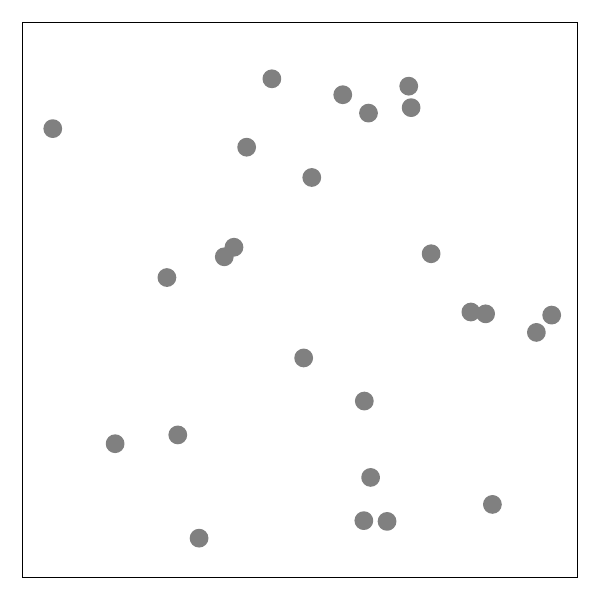}
    \caption{$\mathcal{P}_\text{UAV}$}
    \label{fig:step0}
\end{subfigure}
\hfill
\begin{subfigure}[t]{0.3\linewidth}
    \includegraphics[width=\textwidth]{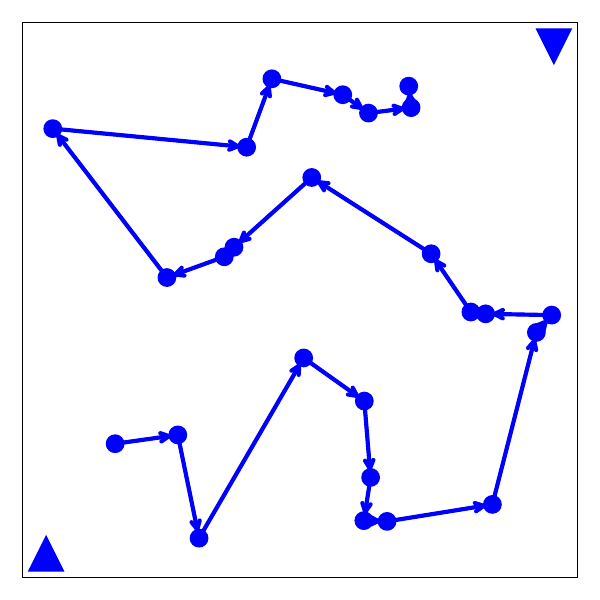}
    \caption{Step 1}
    \label{fig:step1}
\end{subfigure}
\hfill
\begin{subfigure}[t]{0.3\linewidth}
    \includegraphics[width=\textwidth]{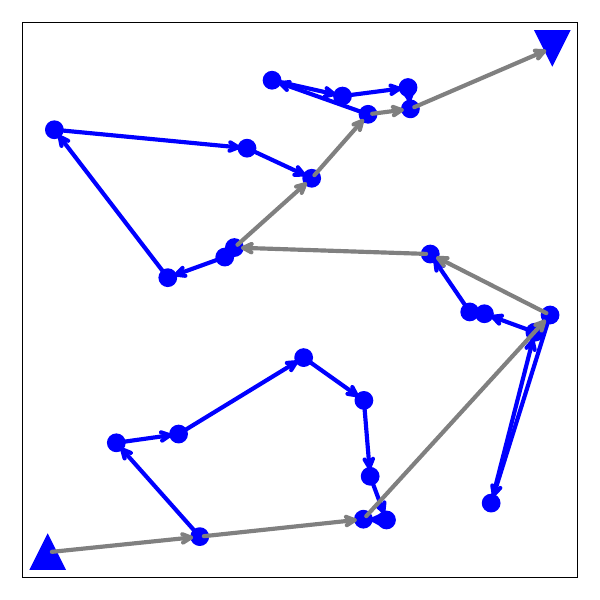}
    \caption{Step 2}
    \label{fig:step2}
\end{subfigure}
\caption{Offline plan generated by PRO-SPECT : (a) $\mathcal{P}_{\text{UAV}}$; (b) TSP solution for $\mathcal{S}$ with $x_{0}$ (lower left) and $x_\text{f}$ (upper right);  (c) tour construction via dynamic programming with UGV path (gray)  and UAV tours (blue).}
\label{fig:algorithm_steps}
\end{figure}

In Step~1, an approximate TSP is solved on $\tilde{\mathcal{P}}_{\text{UAV}}$ using $M_{\theta,a}$ as edge costs, producing the sequence $\tilde{\mathcal{S}}$ (lines 1-3). Here, $\tilde{\mathcal{S}}_1$ is set to the point closest to the UAV start position, and $\tilde{\mathcal{S}}_{\tilde{n}}$ to the point closest to the final position. 

In Step~2, the tours are constructed. First, $\tilde{X}$ (plan), $K$ (cut indices storing the optimal $k_{m-1}$ for each segment), $R$ (release indices storing the optimal release point $r$ for each segment), $C$ (collect indices storing the optimal collect point $c$ for each segment), and $D$ (DP table storing the maximum log-probability achievable for each $(j,m)$) are initiated (line 5). Then, the number of tours $m$ is swept from $1$ to $\tilde{n}$ (line 7), and for each $m$, the endpoint index $j$ is swept from $m$ to $\tilde{n}$ (line 8). Three cases are distinguished: (i) if the team is separated and $m{=}1$ (lines 9-12), then the partial first tour is handled using $f'$, optimizing only the collect point; (ii)~if $m{=}1$ and the team is co-located (lines 13-16), then both release and collect points are optimized using $f$; (iii) if $m{>}1$ (lines 17-20), then the cut, release, and collect are jointly optimized for the last segment and combined with the best prior solution. The sweep terminates once $D_{\tilde{n},m}$ (the maximum log-probability achievable by partitioning all $\tilde{n}$ points into $m$ tours) exceeds $\log(1{-}p_r)$ (lines 21-23).

The backtracking phase (lines 26-36) traces through the stored tables to create the tours. Each tour is reconstructed by retrieving the optimal cut, release, and collect indices and assembling the corresponding waypoint sequence (lines 28-30). The notation $\tilde{\mathcal{S}}_c^{\times(n-\hat{n})}$ denotes the repetition of point $\tilde{\mathcal{S}}_c$ exactly $(n{-}\hat{n})$ times, padding the remaining points in the row. The first tour is then constructed from the current robot positions if they are separated (line 33), or via standard projected release otherwise (line 35), with padding applied analogously in both cases. Finally, the rows after the last tour are set to $\tilde{x}_f$ (line 36), and $\tilde{X}$ is returned.

We provide formal guarantees on the computational complexity of Alg.\ref{alg:offline_planning} and the feasibility of the returned solution. Alg.\ref{alg:offline_planning} returns a plan visiting each point in $\tilde{\mathcal{P}}_\text{UAV} {\subseteq} \mathcal{P}_\text{UAV}$, with a worst case time complexity that is polynomial with the number of points in $\tilde{\mathcal{P}}_\text{UAV}$.

\begin{theorem}
\label{th:feas_complex}
Under Assumption~\ref{as:edges}, for any $p_r \leq \bar{p}_r$ with $\bar{p}_r$ given in \eqref{eq:prbar}, Alg.~\ref{alg:offline_planning} returns a feasible solution to
\eqref{eq:problem}, with worst-case time complexity $O(\tilde{n}^5)$.
\end{theorem}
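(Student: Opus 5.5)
The proof splits into a feasibility part and a complexity part, handled separately.

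\textbf{Feasibility.} We check each constraint of \eqref{eq:problem} for the plan $\tilde{X}$ returned in line 37.

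\emph{Coverage \eqref{eq:coverage}.} The TSP in Step~1 returns a permutation $\tilde{\mathcal{S}}$ of $\tilde{\mathcal{P}}_{\text{UAV}}$. Backtracking produces cut indices $1<k_2<\dots<k_{m}\le \tilde n$, and these partition $\{1,\dots,\tilde n\}$ into the contiguous segments $\tilde{\mathcal{S}}_{1:k_2-1},\dots,\tilde{\mathcal{S}}_{k_m:\tilde n}$. In each row, the release point, the collect point and the remaining segment elements are all written into $\tilde X_{i,2:n+1}$, so every point of the segment appears. Hence every point of $\tilde{\mathcal{P}}_{\text{UAV}}$ appears.

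\emph{Structural constraints \eqref{eq:ground}--\eqref{eq:air}.} The first and last entries of each row are $\pi(\cdot)\in\mathcal{Q}_f^g$, or $\tilde x_{0,g}$, or $\tilde x_f$, so \eqref{eq:ground} holds. The interior entries are either points of $\mathcal{P}_{\text{UAV}}$ or repetitions of the collect point $c_i$. Rows $m+1,\dots,n$ are constant trivial tours.

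\emph{Row length.} We count entries to confirm each row has exactly $n+2$. The padding exponent $n-\hat n+k-1$ in line 29 is chosen so that the row length equals $2+(\hat n-k+1)+(n-\hat n+k-1)=n+2$ when $r\neq c$. The case $r=c$ needs a similar count.

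\emph{Risk constraint \eqref{eq:risk}.} This is the main step. The loop terminates with $D_{\tilde n,m}\ge\log(1-p_r)$. By the DP recurrence \eqref{eq:dynamic_program}, $D_{\tilde n,m}=\sum_{i=1}^m f(\cdot)=\sum_i \log p_{s,i}$ for the reconstructed tours. So \eqref{eq:mission_failure_risk} holds: $\prod_i p_{s,a}(X_i)\,p_{s,g}(X_{i,1},X_{i,n+2})\ge 1-p_r$.

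We then pass from surrogates to the true distributions.
\begin{enumerate}
\item Each surrogate factor satisfies $p_{s,*,i}\ge\prod_j p_{s,j}\ge 1-p_r$. So the surrogate tail beyond $\bar\tau_a$ is at most $q_{*,i}:=1-p_{s,*,i}\le p_r\le\bar p_r$.
\item Applying \eqref{eq:surrogate_to_true} with $p=q_{*,i}$ gives $P(\tau_*\le\bar\tau_a)\ge p_{s,*,i}$ for each individual factor. Equivalently, $\bar\tau_a=\Theta_{*,i}(q_{*,i})\ge\Theta_{*,i}(\bar p_r)\ge\bar t_{*,i}$ by monotonicity of $\Theta$ and \eqref{eq:prbar}, so \eqref{eq:tail_domination} applies at $\bar\tau_a$.
\item By Assumption~\ref{as:edges}, the events $\{\tau_a(X_i)\le\bar\tau_a\}$ and $\{\tau_g(X_{i,1},X_{i,n+2})\le\bar\tau_a\}$ depend on disjoint edge sets and are therefore independent. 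The joint probability thus factors as a product of true success probabilities, each bounded below by the matching surrogate. This gives $P(\cdot)\ge\prod_i p_{s,i}\ge 1-p_r$.
\end{enumerate}

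The delicate point is making sure each factor, not just the product, falls in the regime $p\le\bar p_r$. Step 1 handles this, since each factor is at least the product.

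If Alg.~\ref{alg:offline_planning} returns $\emptyset$, the no-feasibility assumption in the footnote on \eqref{eq:risk} covers the case. When $m=\tilde n$, the DP includes the singleton-tour partition, so failure at $m=\tilde n$ means even singleton tours are infeasible.

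In the online setting, the adjusted budget $\tilde p_r$ composes multiplicatively with the fixed tours beyond the horizon. The same argument, using $f'$ and the probabilities $p'$ in \eqref{eq:energy_constraints_online}, then preserves the global bound.

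\textbf{Complexity.} Step~1 with Christofides costs $O(\tilde n^3)$. In Step~2 the table has $O(\tilde n^2)$ entries $(j,m)$. Each entry maximizes over $k$, $r$ and $c$, which is $O(\tilde n^3)$ combinations.

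To keep the total at $O(\tilde n^5)$, each evaluation of $f$ must cost $O(1)$. By \eqref{eq:time_propagation} the means and variances are sums over consecutive edges, so we precompute prefix sums of $M_{\theta,a}$ and $V_{\theta,a}$ along $\tilde{\mathcal{S}}$ in $O(\tilde n)$. With these, removing $r$ and $c$ from the segment and attaching the projection edges is a constant number of edge corrections, computable from cached pairwise values ($O(\tilde n^2)$ edges). Evaluating $\Phi$ is $O(1)$.

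Backtracking costs $O(\tilde n)$ rows, each written in $O(n)$. The total is therefore $O(\tilde n^3)+O(\tilde n^2\cdot\tilde n^3)=O(\tilde n^5)$.

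This $O(1)$ per-evaluation claim is the one I would double-check most carefully. If it failed, naive evaluation of $f$ would cost $O(\tilde n)$ and inflate the bound to $O(\tilde n^6)$.
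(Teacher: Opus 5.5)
Your core argument is correct and follows the paper's proof essentially step for step. Termination gives $\prod_i p_{s,i}\ge 1-p_r$; each factor is at least the product, so $1-p_{s,*}\le p_r\le\bar p_r$; applying \eqref{eq:surrogate_to_true} with $p=1-p_{s,*}$ bounds each true success probability below by its surrogate; and independence turns the product into the joint probability in \eqref{eq:risk}. The complexity count ($O(\tilde n^2)$ table entries times $O(\tilde n^3)$ choices of $(k,r,c)$, plus $O(\tilde n^3)$ for Christofides) is also the paper's.

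Where you go beyond the paper is the cost of one evaluation of $f$. The paper counts only loop iterations and tacitly treats $f$ as constant-cost, although naively summing the $O(\tilde n)$ edges of a segment would give $O(\tilde n^6)$. Your prefix sums along $\tilde{\mathcal S}$, plus a constant number of edge corrections around $r$ and $c$ from an $O(\tilde n^2)$ cache of pairwise $M_{\theta,*},V_{\theta,*}$ values, are exactly what makes the $O(\tilde n^5)$ bound honest. Handle the endpoint cases and the adjacent case $c=r+1$ explicitly.

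Two side claims should be fixed. First, your treatment of an empty output does not go through. $D_{\tilde n,\tilde n}<\log(1-p_r)$ only says that the \emph{Gaussian-surrogate} success of the singleton partition is below $1-p_r$. Because the surrogate is conservative (true success is at least surrogate success), the true singleton plan may still satisfy \eqref{eq:risk}, so the footnote's hypothesis is not established. Moreover, with the range $k_{m-1}\in\{m+1:j-1\}$ as written, every tour of an $m\ge 2$ plan has at least two points, so the singleton partition is not even a DP candidate unless the range is corrected to $\{m:j\}$. The paper's own proof likewise establishes only soundness, i.e.\ that any non-empty output is feasible; its ``if one exists'' is asserted rather than proved, and you should claim exactly soundness.

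Second, your row count double-counts the collect point. The backtracking row with padding exponent $n-\hat n+k-1$ consists of $\pi(\tilde{\mathcal S}_r)$, $\tilde{\mathcal S}_r$, the $\hat n-k-1$ remaining segment points, $n-\hat n+k-1$ copies of $\tilde{\mathcal S}_c$, and $\pi(\tilde{\mathcal S}_c)$. That is $n+1$ entries, so the listing's exponent is one less than the $n-|\mathcal T_i|+1$ of the row formula. Simply define the padding as whatever fills the row to $n+2$ entries, which also covers $r=c$.
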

\begin{proof}
\textit{Feasibility:} Step 1 solves a TSP, so every point in
$\tilde{\mathcal{P}}_{\text{UAV}}$ appears in $\tilde{\mathcal{S}}$, satisfying
\eqref{eq:coverage}. 
Step 2 terminates only when $D_{\tilde{n},m} \ge \log(1-p_r)$, i.e., $\prod_i p_{s,i} \ge 1-p_r$ where $p_{s,i}$ is the surrogate tour success probability. Each factor lies in $[0,1]$, so $\prod_i p_{s,i} = \prod_i p_{s,a} {\cdot} p_{s,g} \ge 1-p_r$ implies that every $p_{s,a}(X_i), p_{s,g}(X_{i,1}, X_{i,n+2}) \ge 1-p_r \geq 1-\bar{p}_r$. Hence, $1-p_{s,a}(X_i), 1-p_{s,g}(X_{i,1}, X_{i,n+2}) \leq \bar{p}_r$. Together with \eqref{eq:surrogate_to_true}, \eqref{eq:survival_uav}, \eqref{eq:survival_ugv}, this inequality implies:
\begin{subequations}
\label{eq:surrogate_to_true_2}
\begin{align}
P\!\left(\tau_a(X_i) \leq \bar{\tau}_a\right) \geq p_{s,a}(X_i),
\label{eq:surrogate_to_true_a_2}\\
P\!\left(\tau_g(X_{i,1},X_{i,n+2}) \leq \bar{\tau}_a\right) \geq p_{s,g}(X_{i,1}, X_{i,n+2}).
\label{eq:surrogate_to_true_g_2}
\end{align}
\end{subequations}

Using \eqref{eq:surrogate_to_true_2} and $\prod_i p_{s,a} {\cdot} p_{s,g} \ge 1-p_r$, we obtain:
\begin{equation}
    \label{eq:independent_risk}
    \prod_i P(\tau_a(X_i) \leq \bar{\tau}_a) \cdot P(\tau_g(X_{i,1},X_{i,n+2}) \leq \bar{\tau}_a) \geq 1 - p_r.
\end{equation}

Under Assumption~\ref{as:edges}, \eqref{eq:risk} is equivalent to \eqref{eq:independent_risk} due to the mutual independence of travel times. Hence the solution satisfies \eqref{eq:risk}.
Release and collect points $\tilde{X}_{i,1}$ and
$\tilde{X}_{i,n+2}$ lie on feasible ground via $\pi$, satisfying \eqref{eq:ground};
the remaining entries of $\tilde{X}_{1-m,:}$ are points of
$\tilde{\mathcal{P}}_{\text{UAV}}$ or repeated air points, and
$\tilde{X}_{m+1,:} = \tilde{x}_\text{f}$, satisfying \eqref{eq:air}. If no feasible
partition exists for $m \leq \tilde{n}$, the algorithm returns $\emptyset$.
Therefore, for $p_r \leq \bar{p}_r$, Alg.~\ref{alg:offline_planning} returns a
feasible solution to \eqref{eq:problem}, if one exists.

\textit{Complexity:} Step~1 solves a TSP on $\tilde{n}$ points, which runs in
$O(\tilde{n}^3)$ via Christofides' algorithm~\cite{christofides2022worst}. Step 2
builds $D_{\tilde{n},m}$ by sweeping $m$ from $1$ to $m_{min}$ (outer loop,
$m_{min} \leq \tilde{n}$). For each $m$, the inner loop iterates over
$j \in \{m{:}\tilde{n}\}$ with $\leq \tilde{n}$ iterations. The maximization over
$(k_{m-1}, r_m, c_m)$ requires $\leq \tilde{n}$ iterations for $k_{m-1}$ and
worst-case $\tilde{n}$ iterations each for $r_m, c_m$. The total is then
$O(\tilde{n}^4 m_{min})$. Since worst-case for $m_{min}$ is $m_{min} = \tilde{n}$,
overall complexity becomes $O(\tilde{n}^5)$. The projection function $\pi$ runs in
$O(\tilde{n}b)$ total for $\tilde{n}$ points and $b$ known convex obstacles (each
with a bounded number of faces), as each projection reduces to a convex quadratic
program per face~\cite{QP}, which is dominated by Step 2 assuming $b$ is bounded.
Therefore, the worst-case time complexity of Alg.~\ref{alg:offline_planning} is
$O(\tilde{n}^5)$.
\end{proof}

\section{Numerical Evaluations}
\label{sec:numerical}
We compare the scalability and performance of Alg.\ref{alg:offline_planning} to four different algorithms: 1) Branch and Cut \cite{mitchell2002branch}, 2) a metaheuristic, Simulated Annealing, and 3) two heuristics from literature \cite{ropero2019terra, er2025rspectrobustscalableplanner}. Python 3.9 running on an Intel Ultra 7 155h (22 processors) with 32GB RAM was used.

As environmental models require bounded distributions, the Uniform distribution ($\mathcal{U}$) is used. Travel time along an edge of length $l$ is modeled as $t{\sim}l{\cdot} \mathcal{U}\left(\mu{-}\sqrt{3}\sigma, \mu{+}\sqrt{3}\sigma\right)$ where the constant $\sqrt{3}$ scales for variance. The planning model follows: $M_\theta{=}l \cdot \mu$, $V_\theta{=}l^2 \cdot \sigma^2$, and independent edge times. The UAV model uses $\mu {=} 0.1 \frac{s}{m}$ and $\sigma {=} 0.01 \frac{\sqrt{s}}{m}$, and the UGV model uses $\mu {=} 0.4 \frac{s}{m}$ and $\sigma {=} 0.04 \frac{\sqrt{s}}{m}$.
Recharging is modeled as:
\begin{align}
    \tau_c(X_i) = \gamma \cdot \max(\tau_a(X_i), \tau_g(X_{i,1}, X_{i,n+2})),  \gamma \geq 0,
\end{align}
where $\gamma$ is a recharge ratio. 

For $n {\in} \{25, 50, 75, 100\}$ and $p_r{=}0.1$, 10 random $\mathcal{P}_{\text{UAV}}$ were created 
inside the environment defined in \eqref{eq:env} with $\bar{x} = \bar{y} = 4000m$, and fixed $z$-coordinate equal to $100m$. Since UAVs' horizontal speeds are greater than their vertical speeds \cite{guo2020precision}, the UAV vertical travel time is scaled by a factor of 5, i.e., $t_v {\sim} z {\cdot} 5{\cdot}\mathcal{U}\left(\mu{-}\sqrt{3}\sigma, \mu{+}\sqrt{3}\sigma\right)$. For each value of $n$, computation time and mission time ($\tau(X)$) were reported with mean $\pm$ standard deviation over the 10 realizations. Unless otherwise specified, all results are from offline planning only. Remaining parameters are in Table~\ref{tab:mission_parameters}.

\begin{table}[htpb]
\centering
\caption{Mission Parameters for Numerical Evaluations}
\label{tab:mission_parameters}
\resizebox{\columnwidth}{!}{
\begin{tabular}{|c|c||c|c|}
\hline
\textbf{Parameter} & \textbf{Value} & \textbf{Parameter} & \textbf{Value} \\
\hline
$x_{0}$ (Initial point) & $(0, 0, 0)$ & $x_{\text{f}}$ (Final point) & $(4000, 4000, 0)$ \\
\hline
$\bar{\tau}_a$ (Max flight time) & $600$ s & $\gamma$ (Recharge ratio) & $1$ \\
\hline
\end{tabular}}
\end{table}

\subsection{Scalability}
Table \ref{tab:scalability} demonstrates the scalability of PRO-SPECT for different $n$. Step 2 of PRO-SPECT is well-suited to parallelization, as $D_{:,m}$ can be solved independently across $m$. Results are reported for both serial and parallel implementations. Computation time increases with $n$ as expected from the worst-case $O(n^5)$ complexity (see Theorem \ref{th:feas_complex}). Empirical fits for serial and parallel implementations match $O(n^{4.25})$ and $O(n^{3.05})$, respectively. The results demonstrate PRO-SPECT's ability to handle a large range of problem sizes.

\begin{table}[htpb]
\centering
\caption{PRO-SPECT Comp. Time and Mission Time, $\tau(X)$, Varying $n$}
\resizebox{\columnwidth}{!}{%
\begin{tabular}{|c||c|c|c|}
\hline
$\boldsymbol{n}$ & \textbf{Comp. Time (s)} & \textbf{Parallel Comp. Time (s)} & $\boldsymbol{\tau(X)}$ (s) \\ \hline
25  & $8.1\pm0.21$  & $4.7\pm1.2$ & $5400\pm670$ \\ \hline
50  & $120\pm1.1$    & $25\pm6.5$   & $6500\pm570$ \\ \hline
75  & $850\pm1.6$    & $110\pm31$   & $7600\pm560$ \\ \hline
100 & $2800\pm7.5$   & $330\pm14$   & $8200\pm510$ \\ \hline
\end{tabular}
}
\label{tab:scalability}
\end{table}

\subsection{Validation of Probabilistic Constraints}
Table~\ref{tab:failure_rate} reports the mean empirical failure rate ($\hat{p}_r$) of PRO-SPECT for different $n$ and $p_r$. For each $(n, p_r)$ combination, a plan is generated for each of the 10 random $\mathcal{P}_\text{UAV}$ instances and executed 1000 times under independent stochastic travel time realizations to estimate mean $\hat{p}_r$. Then, $\hat{p}_r$ is reported across the 10 instances. In all cases, $\hat{p}_r$ remains below the user-specified $p_r$, validating \eqref{eq:risk}. 
\begin{table}[htpb]
\centering
\caption{\scriptsize PRO-SPECT Mean Empirical Failure Rate, $\hat{p}_r$, Varying $n$ and $p_r$}
\begin{tabular}{|c||c|c|c|c|}
\hline
\multirow{2}{*}{$\boldsymbol{n}$} & \multicolumn{4}{c|}{$\boldsymbol{p_r}$ (user-specified)} \\
\cline{2-5}
& \textbf{0.01} & \textbf{0.10} & \textbf{0.20} & \textbf{0.50} \\
\hline
\textbf{25}  & $0.0001$ & $0.018$ & $0.018$ & $0.10$ \\
\hline
\textbf{50}  & $0.0007$ & $0.011$ & $0.011$ & $0.033$ \\
\hline
\textbf{75}  & $0.0005$ & $0.015$ & $0.024$ & $0.11$ \\
\hline
\textbf{100} & $0.0001$ & $0.0061$ & $0.0061$ & $0.11$ \\
\hline
\end{tabular}
\label{tab:failure_rate}
\end{table}
\subsection{Online Re-planning}
Table~\ref{tab:online} evaluates the online re-planning component of PRO-SPECT against the offline-only plan in terms of empirical failure rate $\hat{p}_r$ and expected mission time $\tau(X)$. For each $n{\in}{\{20, 30, 40, 50\}}$, a plan is generated for each of the 10 random $\mathcal{P}_\text{UAV}$ instances and executed 25 times under independent stochastic travel time realizations. The mean $\hat{p}_r$ is then reported across the 10 instances. Re-planning is triggered after each air point, with earlier interventions yielding larger reductions in $\hat{p}_r$. The re-planning horizon $\tilde{m}$ controls the trade-off between solution quality and computation time; all runs use $\tilde{m} = 2$. Runs with online planning rarely encounter failure due to the ability to end a tour before serious risk. This has a mixed effect on mission time, as the planner can also save time when available. Overall the online planner provides flexibility and maintains $\hat{p}_r{<}p_r$.
\begin{table}[htpb]
\centering
\caption{PRO-SPECT Offline vs. Online: Empirical Failure Rate, $\hat{p}_r$, and Empirical Successful Mission Time, $\hat{\tau}(X)$, for $p_r=0.1$ and Varied $n$}
\resizebox{0.8\columnwidth}{!}{%
\begin{tabular}{|c||c|c|c|c|}
\hline
\multirow{2}{*}{$\boldsymbol{n}$} & \multicolumn{2}{c|}{\textbf{Offline}} & \multicolumn{2}{c|}{\textbf{Online}} \\
\cline{2-5}
& $\boldsymbol{\hat{p}_r}$ & $\boldsymbol{\hat{\tau}(X)}$ \textbf{(s)} & $\boldsymbol{\hat{p}_r}$ & $\boldsymbol{\hat{\tau}(X)}$ \textbf{(s)} \\
\hline
\textbf{20}  & $0.012$ & $5300 \pm 650$ & $\boldsymbol{0}$ & $\boldsymbol{5100 \pm 510}$ \\
\hline
\textbf{30}  & $0.020$ & $5800 \pm 760$ & $\boldsymbol{0}$ & $5800 \pm 1000$ \\
\hline
\textbf{40}  & $0.0058$ & $\boldsymbol{5800 \pm 420}$ & $\boldsymbol{0.004}$ & $6200 \pm 560$ \\
\hline
\textbf{50} & $0.011$ & $6500 \pm 570$ & $\boldsymbol{0}$ & $6500 \pm 450$ \\
\hline
\end{tabular}}
\label{tab:online}
\end{table}
\subsection{Comparison to Branch and Cut}
Table~\ref{tab:bnc} compares PRO-SPECT against Branch and Cut, which returns the global optimum for \eqref{eq:problem}. Due to the NP-hardness of \eqref{eq:problem}, Branch and Cut computation time grows exponentially with $n$, limiting its applicability to very small $n$. In contrast, PRO-SPECT achieves competitive mission times at a fraction of the computation cost.
The optimality gaps in expected mission time are 20.8\%, 10.7\%, and 5.7\% for $n=2,3,4$, respectively.
\begin{table}[htpb]
\centering
\caption{Branch and Cut vs. PRO-SPECT: Computation Time \\ and Mission Time, $\tau(X)$, Varying $n$}
\resizebox{\columnwidth}{!}{%
\begin{tabular}{|c||c|c|c|c|}
\hline
\multirow{2}{*}{$\boldsymbol{n}$} & \multicolumn{2}{c|}{\textbf{Branch and Cut}} & \multicolumn{2}{c|}{\textbf{PRO-SPECT}} \\ \cline{2-5} 
 & \textbf{Comp. Time (s)} & $\boldsymbol{\tau(X) \text{ (s)}}$ & \textbf{Comp. Time (s)} & $\boldsymbol{\tau(X) \text{ (s)}}$ \\ \hline
\textbf{2} & $0.43 \pm 0.36$ & $\boldsymbol{2400 \pm 140}$ & $\boldsymbol{0.0051 \pm 0.0007}$ & $2900 \pm 640$ \\ \hline
\textbf{3} & $3.0 \pm 1.3$ & $\boldsymbol{2800 \pm 510}$ & $\boldsymbol{0.0075 \pm 0.0016}$ & $3100 \pm 660$ \\ \hline
\textbf{4} & $18 \pm 14$ & $\boldsymbol{3500 \pm 900}$ & $\boldsymbol{0.015 \pm 0.0049}$ & $3700 \pm 980$ \\ \hline
\end{tabular}}
\label{tab:bnc}
\end{table}

\subsection{Comparison to Simulated Annealing}
Table~\ref{tab:sa} compares PRO-SPECT against Simulated Annealing (SA) for $n=10$ across 10 random $\mathcal{P}_\text{UAV}$ instances. A logarithmic cooling schedule was used for SA \cite{nourani1998sa} with $T_\text{min} {=} 0.01$ and $T_\text{max} \in \{500, 1000, 1500\}$. Each instance was run 3 times due to the stochastic nature of SA. While increasing the number of steps increases computation cost, solution quality does not improve consistently, suggesting diminishing returns. PRO-SPECT achieves significantly better computation and mission times: $0.13 \pm 0.05$ s and $\tau(X) = 4400 \pm 1300$ s respectively.

\begin{table}[htpb]
\centering
\caption{Simulated Annealing Average Computation Time and Mission Time, $\tau(X)$, Varying $T_{\text{max}}$ and Number of Steps}
\resizebox{\linewidth}{!}{
\begin{tabular}{|c||c|c|c|c|c|c|}
\hline
\multirow{3}{*}{$\boldsymbol{T_{\text{max}}}$}
& \multicolumn{3}{c|}{\textbf{Comp. Time (s)}}
& \multicolumn{3}{c|}{$\boldsymbol{\tau(X)}$ \textbf{(s)}} \\
\cline{2-7}
& \multicolumn{3}{c|}{\textbf{Steps}}
& \multicolumn{3}{c|}{\textbf{Steps}} \\
\cline{2-7}
& \textbf{50k} & \textbf{100k} & \textbf{150k}
& \textbf{50k} & \textbf{100k} & \textbf{150k} \\
\hline
\textbf{500}  & $40 \pm 1.5$ & $40 \pm 3.7$ & $41 \pm 3.5$ & $8700 \pm 1200$ & $9100 \pm 1300$ & $8700 \pm 1100$ \\
\hline
\textbf{1000} & $80 \pm 14$ & $80 \pm 11$ & $79 \pm 8.2$ & $7900 \pm 830$ & $7800 \pm 1300$ & $8300 \pm 1100$ \\
\hline
\textbf{1500} & $120 \pm 7.0$ & $110 \pm 9.8$ & $120 \pm 8.4$ & $7700 \pm 1000$ & $7700 \pm 1100$ & $8000 \pm 820$ \\
\hline
\end{tabular}%
}
\label{tab:sa}
\end{table}

\subsection{Comparison to State-of-the-Art}
PRO-SPECT is compared with two methods that address similar energy-aware UAV-UGV planning problems: 1) TERRA \cite{ropero2019terra}, and 2) RSPECT \cite{er2025rspectrobustscalableplanner}. Since the methods are deterministic, their planning budgets are scaled conservatively i.e., reducing $\bar{\tau}_a$ during planning, so that the deterministic plan accounts for likely travel time deviations. Plans are executed under the full stochastic travel times. All methods use parameters from Table~\ref{tab:mission_parameters}, unless otherwise specified. TSPs and GTSPs are solved using \cite{christofides2022worst} and \cite{smith2016GLNS}, respectively.

\subsubsection{Comparison to TERRA}
TERRA \cite{ropero2019terra} is deterministic and assumes a known environment; its planning UAV range is scaled by $\frac{\mu}{\mu+\sqrt{3}\sigma}$ (i.e., conservative planning bounds) to provide a more fair comparison under stochastic travel times. Note that if runtime conditions still violate the actual flight time ($\bar{\tau}_a$) TERRA might require complete mission re-planning, whereas PRO-SPECT can perform online re-planning. As TERRA does not support distinct $x_\text{f}$ and $x_0$, we set $x_\text{f} = x_0 = (0,0,0)$. A plan is generated for each of the 10 $\mathcal{P}_\text{UAV}$ and executed 100 times via independent stochastic travel time realizations to obtain mean $\hat{p}_r$.

TERRA uses rendezvous points (UGV points), where the UGV remains stationary while the UAV visits monitoring points ($\mathcal{P}_{\text{UAV}}$) and meets with the UGV. This design choice can significantly slow missions when monitoring points are far apart. PRO-SPECT addresses this by allowing the UGV to move concurrently with the UAV. As shown in Table~\ref{tab:sota_terra}, PRO-SPECT consistently achieves lower mission times and substantially lower failure rates that always satisfy $\hat{p}_r < p_r$, while TERRA's $\hat{p}_r$ reaches up to 0.85 in some configurations.

\begin{table}[htpb]
\centering
\caption{TERRA vs. PRO-SPECT: Computation Time, Mission Time, $\tau(X)$, and Empirical Failure Rate, $\hat{p}_r$, Varying $n$}
\resizebox{\columnwidth}{!}{%
\begin{tabular}{|c||c|c|c|c|c|c|}
\hline
\multirow{2}{*}{$\boldsymbol{n}$} & \multicolumn{3}{c|}{\textbf{TERRA}} & \multicolumn{3}{c|}{\textbf{PRO-SPECT}} \\
\cline{2-7}
& \textbf{Comp. Time (s)} & $\boldsymbol{\tau(X)}$ \textbf{(s)} & $\boldsymbol{\hat{p}_r}$ & \textbf{Comp. Time (s)} & $\boldsymbol{\tau(X)}$ \textbf{(s)} & $\boldsymbol{\hat{p}_r}$ \\
\hline
\textbf{25}  & $\boldsymbol{0.017 \pm 0.0069}$ & $12000 \pm 1100$ & $0.076$ & $4.0 \pm 0.21$ & $\boldsymbol{6200 \pm 500}$ & $\textbf{0.0095}$ \\
\hline
\textbf{50}  & $\boldsymbol{0.09 \pm 0.044}$ & $16000 \pm 1400$ & $0.31$ & $68 \pm 2.7$ & $\boldsymbol{6900 \pm 780}$ & $\textbf{0.014}$ \\
\hline
\textbf{75}  & $\boldsymbol{0.31 \pm 0.23}$ & $19000 \pm 1600$ & $0.68$ & $380 \pm 6.5$ & $\boldsymbol{7600 \pm 790}$ & $\textbf{0.014}$ \\
\hline
\textbf{100} & $\boldsymbol{0.82 \pm 0.37}$ & $22000 \pm 1900$ & $0.85$ & $1300 \pm 14$ & $\boldsymbol{8600 \pm 480}$ & $\textbf{0.0013}$ \\
\hline
\end{tabular}}
\label{tab:sota_terra}
\end{table}

\subsubsection{Comparison to RSPECT}
Unlike TERRA, RSPECT \cite{er2025rspectrobustscalableplanner} supports distinct $x_\text{f}$ and $x_0$, and does not require a known environment a priori. RSPECT handles uncertainty through its deterministic margins $(\delta_a,\delta_g)$, offering lower bounds on plan robustness; however, it cannot bound the probability of failure to a user-specified risk level. In contrast, PRO-SPECT uses a joint chance constraint, directly bounding the probability of failure globally to  $p_r$.

RSPECT's $\delta_a$ and $\delta_g$ are set by estimating $p_r$ as a fraction of the uniform distribution, which is then used to scale $\bar{\tau}_a$ (i.e., conservative planning bounds).
This assumes the worst-case travel time $\mu{+}\sqrt{3}\sigma$ for $p_r{=}0$; the mean $\mu$ at $p_r{=}0.5$; and the best-case $\mu{-}\sqrt{3}\sigma$ at $p_r{=}1$:
\begin{equation}
    \delta_* = \bar{\tau}_a \left(1 - \frac{\mu}{\mu + \sqrt{3} \sigma (1 - 2 p_r)} \right).
    \label{eq:approximation}
\end{equation}
This approximation is necessitated by RSPECT's lack of a probabilistic planning mechanism: without a chance constraint, this represents the closest feasible equivalence under the assumed uniform distribution.

For each combination of $n$ and $p_r\in\{0.01, 0.1, 0.2, 0.5\}$ a plan is generated for each of the 10 $\mathcal{P}_\text{UAV}$ instances, and executed 1000 times under independent stochastic travel time realizations to estimate mean $\hat{p}_r$.

As shown in Table \ref{tab:sota_rspect}, computation time is independent of $p_r$ for both RSPECT and PRO-SPECT. For RSPECT, this is expected as \eqref{eq:approximation} is used. For PRO-SPECT, this is a stronger result. Since DP optimizes under a joint chance constraint, tighter $p_r$ would be expected to demand more tours and computation effort. However, this effect is weak, as the increase in number of tours remains modest.

As demonstrated in all cases of Table \ref{tab:sota_rspect}, PRO-SPECT achieves mission times up to $10\%$ lower than RSPECT, while maintaining $\hat{p}_r {<} p_r$. The difference is more evident in higher $p_r$ cases: at $p_r{=}0.5$ and $n{=}100$, RSPECT's $\hat{p}_r$ rises to 0.9; while PRO-SPECT maintains $\hat{p}_r{=}0.11$, within the specified $p_r$. This is an important limitation: the deterministic margin cannot adapt to specified $p_r$, causing $\hat{p}_r$ to diverge. RSPECT achieves much lower computation times and has an $O(n^3)$ time complexity, which permits a design decision between plan efficiency and safety, and computation cost.

\begin{table}[htpb]
\centering
\caption{RSPECT vs. PRO-SPECT: Comp. Time, Mission Time, $\tau(X)$, and Empirical Failure Rate, $\hat{p}_r$, Varying $n$, $p_r$}
\resizebox{\columnwidth}{!}{%
\begin{tabular}{|c|c||c|c|c|c|c|c|}

\hline
\multirow{2}{*}{$\boldsymbol{p_r}$} & \multirow{2}{*}{$\boldsymbol{n}$} & \multicolumn{3}{c|}{\textbf{RSPECT}} & \multicolumn{3}{c|}{\textbf{PRO-SPECT}} \\
\cline{3-8}
& & \textbf{Comp. Time (s)} & $\boldsymbol{\tau(X)}$ \textbf{(s)} & $\boldsymbol{\hat{p}_r}$ & \textbf{Comp. Time (s)} & $\boldsymbol{\tau(X)}$ \textbf{(s)} & $\boldsymbol{\hat{p}_r}$ \\
\hline
\multirow{4}{*}{\textbf{0.01}}
& \textbf{25}  & $\boldsymbol{ 0.057 \pm 0.012 }$ & $5900 \pm 750$ & $ \textbf{0} $ & $6.6 \pm 0.11$ & $\boldsymbol{5800 \pm 720}$ & $0.0001$ \\
\cline{2-8}
& \textbf{50}  & $\boldsymbol{ 0.16 \pm 0.017 }$ & $7100 \pm 640$ & $ \textbf{0} $ & $140 \pm 8.8$ & $\boldsymbol{6700 \pm 530}$ & $0.0021$ \\
\cline{2-8}
& \textbf{75}  & $\boldsymbol{ 0.32 \pm 0.048 }$ & $8100 \pm 490$ & $ \textbf{0.0006} $ & $790 \pm 2.5$ & $\boldsymbol{7700 \pm 440}$ & $0.0010$ \\
\cline{2-8}
& \textbf{100} & $\boldsymbol{ 0.52 \pm 0.098 }$ & $9000 \pm 600$ & $ 0.0013 $ & $3300 \pm 33$ & $\boldsymbol{8400 \pm 640}$ & $\textbf{0.0003}$ \\
\hline \hline

\multirow{4}{*}{\textbf{0.1}}
& \textbf{25}  & $\boldsymbol{ 0.055 \pm 0.012 }$ & $5800 \pm 590$ & $ 0.025 $ & $8.1 \pm 0.21$ & $\boldsymbol{5400 \pm 670}$ & $\textbf{0.018}$ \\
\cline{2-8}
& \textbf{50}  & $\boldsymbol{ 0.14 \pm 0.02 }$ & $7100 \pm 820$ & $ 0.029 $ & $120 \pm 1.1$ & $\boldsymbol{6500 \pm 570}$ & $\textbf{0.011}$ \\
\cline{2-8}
& \textbf{75}  & $\boldsymbol{ 0.32 \pm 0.05 }$ & $8200 \pm 380$ & $ 0.025 $ & $850 \pm 1.6$ & $\boldsymbol{7600 \pm 560}$ & $\textbf{0.015}$ \\
\cline{2-8}
& \textbf{100} & $\boldsymbol{ 0.53 \pm 0.075 }$ & $8800 \pm 520$ & $ 0.027 $ & $2800 \pm 7.5$ & $\boldsymbol{8200 \pm 510}$ & $\textbf{0.0061}$ \\
\hline \hline

\multirow{4}{*}{\textbf{0.2}}
& \textbf{25}  & $\boldsymbol{ 0.052 \pm 0.005 }$ & $6100 \pm 530$ & $ 0.040 $ & $7.7 \pm 0.13$ & $\boldsymbol{5400 \pm 670}$ & $\textbf{0.018}$ \\
\cline{2-8}
& \textbf{50}  & $\boldsymbol{ 0.15 \pm 0.030 }$ & $6900 \pm 550$ & $ 0.075 $ & $120 \pm 1.5$ & $\boldsymbol{6500 \pm 570}$ & $\textbf{0.011}$ \\
\cline{2-8}
& \textbf{75}  & $\boldsymbol{ 0.32 \pm 0.053 }$ & $7800 \pm 600$ & $ 0.15 $ & $800 \pm 2$ & $\boldsymbol{7500 \pm 620}$ & $\textbf{0.024}$ \\
\cline{2-8}
& \textbf{100} & $\boldsymbol{ 0.54 \pm 0.096 }$ & $8500 \pm 560$ & $ 0.13 $ & $2900 \pm 10$ & $\boldsymbol{8200 \pm 510}$ & $\textbf{0.0061}$ \\
\hline \hline

\multirow{4}{*}{\textbf{0.5}}
& \textbf{25}  & $\boldsymbol{ 0.051 \pm 0.0055 }$ & $5500 \pm 600$ & $ 0.49 $ & $6.2 \pm 0.28$ & $\boldsymbol{5200 \pm 550}$ & $\textbf{0.10}$ \\
\cline{2-8}
& \textbf{50}  & $\boldsymbol{ 0.13 \pm 0.017 }$ & $6600 \pm 430$ & $ 0.63 $ & $110 \pm 1.3$ & $\boldsymbol{6500 \pm 620}$ & $\textbf{0.033}$ \\
\cline{2-8}
& \textbf{75}  & $\boldsymbol{ 0.30 \pm 0.041 }$ & $7600 \pm 640$ & $ 0.74 $ & $800 \pm 2.6$ & $\boldsymbol{7400 \pm 610}$ & $\textbf{0.11}$ \\
\cline{2-8}
& \textbf{100} & $\boldsymbol{ 0.53 \pm 0.085 }$ & $8300 \pm 550$ & $ 0.90 $ & $3100 \pm 20$ & $\boldsymbol{8100 \pm 470}$ & $\textbf{0.11}$ \\
\hline

\end{tabular}}
\label{tab:sota_rspect}
\end{table}

\section{Simulations}

\subsection{Setup}
To validate PRO-SPECT, we simulate the UAV-UGV team in ROS2 using a ROSflight-based UAV (ROScopter)~\cite{moore2025rosflight20leanros} and a Clearpath Jackal UGV. Stochasticity is introduced via an unknown zero-mean spatial wind field generated by Perlin Noise \cite{perlin1985image}. This distribution produces smooth vector fields with spatial self-correlation, consistent with structured atmospheric disturbances, providing a suitable stochastic environment. $\mathcal{P}_{\text{UAV}}$ instances were created inside \eqref{eq:env} with $\bar{x}{=}\bar{y}{=}40$m and fixed $z{=}5$m, with $n{=}10$ and $p_r{=}0.1$ repeated over $10$ trials. Online re-planning is enabled with horizon $\tilde{m}{=}2$. Simulation parameters are presented in Table \ref{tab:mission_parameters_ros}.
\begin{table}[htpb]
\centering
\caption{Mission Parameters for Simulations}
\label{tab:mission_parameters_ros}
\resizebox{\columnwidth}{!}{
\begin{tabular}{|c|c||c|c|}
\hline
\textbf{Parameter} & \textbf{Value} & \textbf{Parameter} & \textbf{Value} \\
\hline
$x_{0}$ (Initial point) & $(0, 0, 0)$ & $x_{\text{f}}$ (Final point) & $(40, 40, 0)$ \\
\hline
$\bar{\tau}_a$ (Max flight time) & $60$ s & $\gamma$ (Recharge ratio) & $1$ \\
\hline
$\bar{w}_x, \bar{w}_y$ (Max wind speed, x, y) & $1$ m/s & $\bar{w}_z$ (Max wind speed, z) & $0.3$ m/s \\
\hline
$[\underline{v}_a, \overline{v}_a]$ (UAV speed) & $[1.5,2]$ m/s &$[\underline{v}_g, \overline{v}_g]$ (UGV speed)  & $[0.15,0.25]$ m/s \\
\hline
\end{tabular}}
\end{table}

\subsubsection{Agent Models}

To connect the travel models~\eqref{eq:uav_model}, \eqref{eq:ugv_model} to the spatial wind field, we use the fact that wind speed is bounded. Consider a UAV edge from $u$ to $v$ with length $L$ in direction $\hat{d}$ with closed-loop along-track speed $v_a \in [\underline{v}_a, \overline{v}_a]$ under PID 
tracking. Given component-wise wind bounds $|w_x|\le\bar{w}_x$, $|w_y|\le\bar{w}_y$, $|w_z|\le\bar{w}_z$ (Table~\ref{tab:mission_parameters_ros}), the along-track component satisfies $|\hat{d}^\top w|\le\bar{w}_\parallel$, where $\bar{w}_\parallel {:=} |\hat{d}_x|\bar{w}_x + |\hat{d}_y|\bar{w}_y + |\hat{d}_z|\bar{w}_z$. Assuming $\underline{v}_a > \bar{w}_\parallel$,\footnote{Unbounded wind may make the problem infeasible.} 
the UAV edge travel time is bounded:
\begin{equation}
    t_{\min,a}(u,v)=\frac{L}{\overline{v}_a+\bar{w}_\parallel}, \quad
    t_{\max,a}(u,v)=\frac{L}{\underline{v}_a-\bar{w}_\parallel}.
    \label{eq:uav_time_bounds}
\end{equation}

Since only bounds are available, we model each edge time as uniform on $[t_{\min,a}(u,v),\, t_{\max,a}(u,v)]$, giving the moments in~\eqref{eq:uav_model}. Because the wind is bounded, the edge travel times are bounded, satisfying the boundedness of Assumption~\ref{as:edges}. Absent a known wind distribution, we assume edge-time independence. The Perlin field is spatially correlated, so these simulations test the method beyond the independence assumption of Theorem~\ref{th:feas_complex}; positive correlation is expected to inflate the aggregate variance, which would tend to keep the independent-edge estimate conservative.
\begin{equation}
    M_{\theta,a}(u,v)=\frac{t_{\min.a}(u,v)+t_{\max,a}(u,v)}{2}, \quad
    V_{\theta,a}(u,v)=\frac{\big(t_{\max,a}(u,v)-t_{\min,a}(u,v)\big)^2}{12}.
    \label{eq:uav_moments_bounds}
\end{equation}
The UGV moments are obtained similarly, with $t_{\min,g}(u,v){=}L/\overline{v}_g$ and $t_{\max,g}(u,v){=}L/\underline{v}_g$ from $v_g{\in}[\underline{v}_g, \overline{v}_g]$ under PID tracking.

\begin{figure}[htbp]
    \centering
    \includegraphics[width=1.0\linewidth]{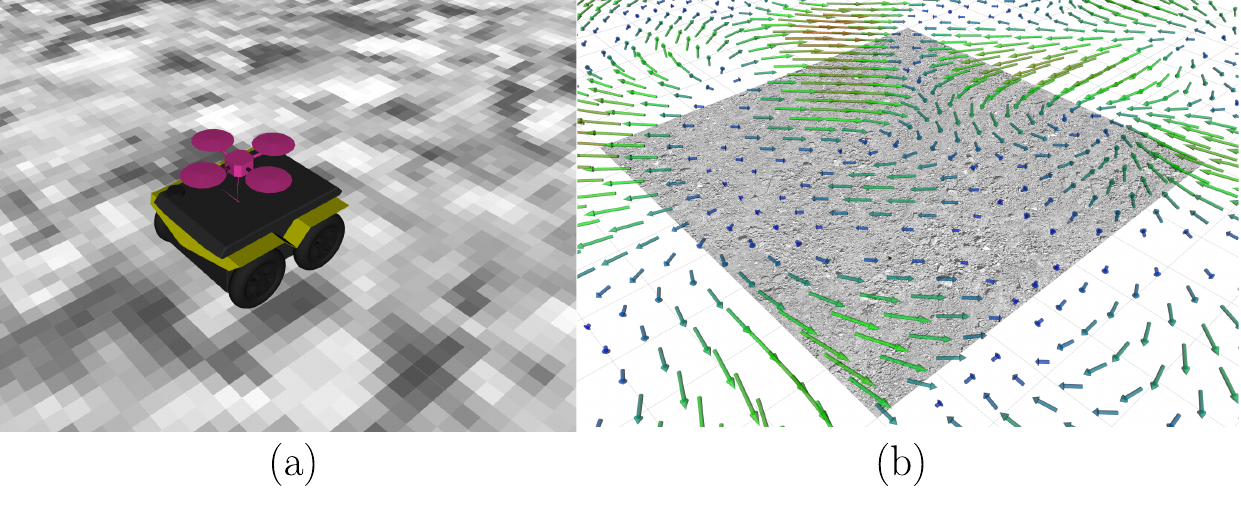}
    \caption{Simulation setup. (a) UAV-UGV team in the simulated environment. (b) Simulated wind field applied to the UAV, with arrow direction and magnitude indicating local wind velocity.}
    \label{fig:placeholder}
\end{figure}

\subsection{Results}
Across all 10 trials, one trial experienced one tour time above $\bar{\tau}_a$, giving total $\hat{p}_r{=}0.1$ and maintaining the specified $p_r{=}0.1$. The mean of the ratio of empirical to planned mission time $\frac{\hat{\tau}(X)}{\tau(X)}$ was $1.02\pm0.09$ across successful trials.

Fig.~\ref{fig:plan_ros} illustrates a representative run: at $t{=}0$ the full plan ($m{=}4$) is initialized (Fig.~\ref{fig:plan0}), and as the mission progresses the online re-planner updates after each UAV point to refine the remaining tours in response to wind-induced disturbances. Re-planning is visible in the evolving plans at $t{=}150$~s and $t{=}410$~s: (Fig.~\ref{fig:plan1}) and (Fig.~\ref{fig:plan2}). 

\begin{figure}[htb]
\centering
\begin{subfigure}[t]{0.3\linewidth}
    \includegraphics[width=\textwidth]{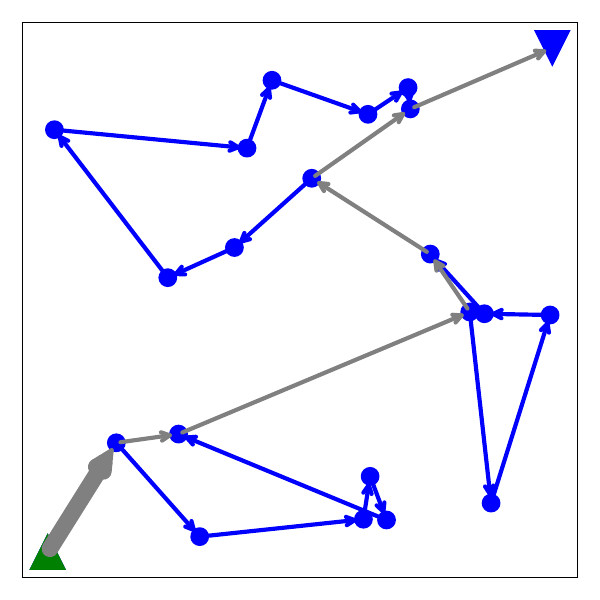}
    \caption{$t=0$}
    \label{fig:plan0}
\end{subfigure}
\hfill
\begin{subfigure}[t]{0.3\linewidth}
    \includegraphics[width=\textwidth]{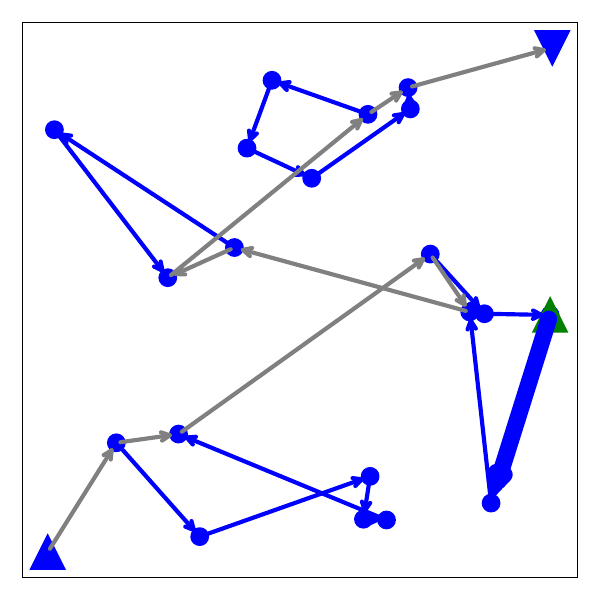}
    \caption{$t=150$}
    \label{fig:plan1}
\end{subfigure}
\hfill
\begin{subfigure}[t]{0.3\linewidth}
    \includegraphics[width=\textwidth]{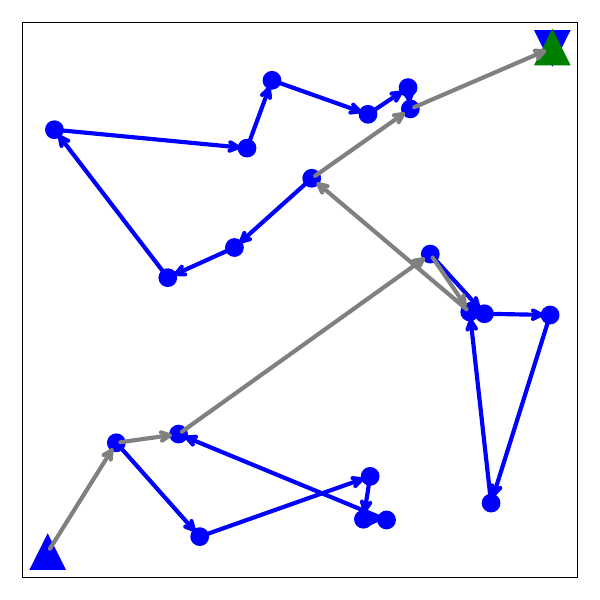}
    \caption{$t=410$}
    \label{fig:plan2}
\end{subfigure}
\caption[Mission Plan in ROS Simulation]{Mission Plan in ROS Simulation with Online Replanning: \eqref{fig:plan0} at mission start, \eqref{fig:plan1} partway through second tour, and \eqref{fig:plan2} at mission end. Green triangle denotes UAV location and bold edge denotes next transit.}
\label{fig:plan_ros}
\end{figure}

Less than $1\%$ of mission time was spent waiting for computation; online re-planning was fast enough to execute on the run. Therefore the increase of the mission time is largely due to wind-induced delay rather than re-planning overhead. \textcolor{black}{These results suggest that PRO-SPECT can still produce feasible solutions even under a spatially correlated wind field, a setting outside the assumption of Theorem~\ref{th:feas_complex}, with empirical risk meeting the target ($\hat{p}_r \leq p_r = 0.1$), and that online re-planning restores feasibility on the fly.}

\section{Conclusion}
We presented PRO-SPECT, a scalable algorithm for energy-aware UAV-UGV planning in stochastic environments. The problem was formulated as a Mixed-Integer Program with a joint chance constraint. The integrated offline-online framework combines TSP-based visit ordering with a dynamic program for tour construction, with an online re-planner that provably preserves the global risk bound under disturbances. Feasibility and worst-case time complexity were formally guaranteed (Thm.\ref{th:feas_complex}). Validations via numerical evaluations and ROS simulations were carried out. Failure rate was observed below the allowable limit in all cases, with mission times competitive with or better than all baselines.

Future work includes extending our method to multi-UAV fleets and incorporation of more complex (spatial, temporal, logical) planning specifications.

\bibliographystyle{ieeetr}
\bibliography{references}

@article{manfreda2018use,
  title={On the use of unmanned aerial systems for environmental monitoring},
  author={Manfreda, Salvatore and McCabe, Matthew F and Miller, Pauline E and Lucas, Richard and Pajuelo Madrigal, Victor and Mallinis, Giorgos and Ben Dor, Eyal and Helman, David and Estes, Lyndon and Ciraolo, Giuseppe and others},
  journal={Remote sensing},
  volume={10},
  number={4},
  pages={641},
  year={2018},
  publisher={MDPI}
}

@article{boccardo2015uav,
  title={UAV deployment exercise for mapping purposes: Evaluation of emergency response applications},
  author={Boccardo, Piero and Chiabrando, Filiberto and Dutto, Furio and Giulio Tonolo, Fabio and Lingua, Andrea},
  journal={Sensors},
  volume={15},
  number={7},
  pages={15717--15737},
  year={2015},
  publisher={mdpi}
}

@article{tokekar2016sensor,
  title={Sensor planning for a symbiotic UAV and UGV system for precision agriculture},
  author={Tokekar, Pratap and Vander Hook, Joshua and Mulla, David and Isler, Volkan},
  journal={IEEE transactions on robotics},
  volume={32},
  number={6},
  pages={1498--1511},
  year={2016},
  publisher={IEEE}
}

@inproceedings{shi2022risk,
  title={Risk-aware uav-ugv rendezvous with chance-constrained markov decision process},
  author={Shi, Guangyao and Karapetyan, Nare and Asghar, Ahmad Bilal and Reddinger, Jean-Paul and Dotterweich, James and Humann, James and Tokekar, Pratap},
  booktitle={2022 IEEE 61st Conference on Decision and Control (CDC)},
  pages={180--187},
  year={2022},
  organization={IEEE}
}

@INPROCEEDINGS{yu2021rl,
  author={Yu, Qifei and Shen, Zhexin and Pang, Yijiang and Liu, Rui},
  booktitle={2021 IEEE 17th International Conference on Automation Science and Engineering (CASE)}, 
  title={Proficiency Constrained Multi-Agent Reinforcement Learning for Environment-Adaptive Multi UAV-UGV Teaming}, 
  year={2021},
  volume={},
  number={},
  pages={2114-2118},
  doi={10.1109/CASE49439.2021.9551457}}

@inproceedings{christofides2022worst,
  title={Worst-case analysis of a new heuristic for the travelling salesman problem},
  author={Christofides, Nicos},
  booktitle={Operations Research Forum},
  volume={3},
  number={1},
  pages={20},
  year={2022},
  organization={Springer}
}

@ARTICLE{Lin2022,
  author={Lin, Xiaoshan and Yazıcıoğlu, Yasin and Aksaray, Derya},
  journal={IEEE Robotics and Automation Letters}, 
  title={Robust Planning for Persistent Surveillance With Energy-Constrained UAVs and Mobile Charging Stations}, 
  year={2022},
  volume={7},
  number={2},
  pages={4157-4164},
  doi={10.1109/LRA.2022.3146938}}

@article{ropero2019terra,
title = {TERRA: A path planning algorithm for cooperative UGV–UAV exploration},
journal = {Engineering Applications of Artificial Intelligence},
volume = {78},
pages = {260-272},
year = {2019},
issn = {0952-1976},
doi = {https://doi.org/10.1016/j.engappai.2018.11.008},
url = {https://www.sciencedirect.com/science/article/pii/S095219761830246X},
author = {Fernando Ropero and Pablo Muñoz and María D. R-Moreno},

}

@INPROCEEDINGS{yu2018algorithms,
  author={Yu, Kevin and Budhiraja, Ashish Kumar and Tokekar, Pratap},
  booktitle={2018 IEEE International Conference on Robotics and Automation (ICRA)}, 
  title={Algorithms for Routing of Unmanned Aerial Vehicles with Mobile Recharging Stations}, 
  year={2018},
  volume={},
  number={},
  pages={5720-5725},
  doi={10.1109/ICRA.2018.8460819}
}

@article{nourani1998sa,
doi = {10.1088/0305-4470/31/41/011},
url = {https://dx.doi.org/10.1088/0305-4470/31/41/011},
year = {1998},
month = {oct},
publisher = {},
volume = {31},
number = {41},
pages = {8373},
author = {Yaghout Nourani and Bjarne Andresen},
title = {A comparison of simulated annealing cooling strategies},
journal = {Journal of Physics A: Mathematical and General},
}

@article{mitchell2002branch,
  title={Branch-and-cut algorithms for combinatorial optimization problems},
  author={Mitchell, John E},
  journal={Handbook of applied optimization},
  volume={1},
  number={1},
  pages={65--77},
  year={2002},
  publisher={Oxford, UK},
  label={Mitchell2002}
}

@article{maini2019coverage,
  author={Maini, Parikshit and Sundar, Kaarthik and Singh, Mandeep and Rathinam, Sivakumar and Sujit, P. B.},
  journal={IEEE Transactions on Aerospace and Electronic Systems}, 
  title={Cooperative Aerial–Ground Vehicle Route Planning With Fuel Constraints for Coverage Applications}, 
  year={2019},
  volume={55},
  number={6},
  pages={3016-3028},
  doi={10.1109/TAES.2019.2917578}}

@article{QP,
  title={An extension of Karmarkar's projective algorithm for convex quadratic programming},
  author={Ye, Yinyu and Tse, E.},
  journal={Mathematical Programming},
  volume={44},
  number={1-3},
  pages={157--179},
  year={1989},
  publisher={Springer}
}

@Article{smith2016GLNS,
  author =    {S. L. Smith and F. Imeson},
  title =     {{GLNS}: An Effective Large Neighborhood Search Heuristic
  				for the Generalized Traveling Salesman Problem},
  journal =   {Computers \& Operations Research},
  volume =    87,
  pages =     {1-19},
  year =      2017,
}

@article{guo2020precision,
  title={Precision landing test and simulation of the agricultural UAV on apron},
  author={Guo, Yangyang and Guo, Jiaqian and Liu, Chang and Xiong, Hongting and Chai, Lilong and He, Dongjian},
  journal={Sensors},
  volume={20},
  number={12},
  pages={3369},
  year={2020},
  publisher={MDPI}
}

@misc{er2025rspectrobustscalableplanner,
      title={RSPECT: Robust and Scalable Planner for Energy-Aware Coordination of UAV-UGV Teams in Aerial Monitoring}, 
      author={Cahit Ikbal Er and Amin Kashiri and Yasin Yazicioglu},
      year={2025},
      eprint={2511.21957},
      archivePrefix={arXiv},
      primaryClass={cs.RO},
      url={https://arxiv.org/abs/2511.21957}, 
}

@article{thelasingha2024iterative,
  title={Iterative planning for multi-agent systems: An application in energy-aware UAV-UGV cooperative task site assignments},
  author={Thelasingha, Neelanga and Julius, A Agung and Humann, James and Reddinger, Jean-Paul and Dotterweich, James and Childers, Marshal},
  journal={IEEE Transactions on Automation Science and Engineering},
  volume={22},
  pages={3685--3703},
  year={2024},
  publisher={IEEE}
}

@misc{cai2025energyawareroutingalgorithmmobile,
      title={Energy-Aware Routing Algorithm for Mobile Ground-to-Air Charging}, 
      author={Bill Cai and Fei Lu and Lifeng Zhou},
      year={2025},
      eprint={2310.07729},
      archivePrefix={arXiv},
      primaryClass={cs.RO},
      url={https://arxiv.org/abs/2310.07729}, 
}

@article{albarakati2021multiobjective,
  title={Multiobjective risk-aware path planning in uncertain transient currents: An ensemble-based stochastic optimization approach},
  author={Albarakati, Sultan and Lima, Ricardo M and Theu{\ss}l, Thomas and Hoteit, Ibrahim and Knio, Omar},
  journal={IEEE Journal of Oceanic Engineering},
  volume={46},
  number={4},
  pages={1082--1098},
  year={2021},
  publisher={IEEE}
}

@misc{moore2025rosflight20leanros,
      title={ROSflight 2.0: Lean ROS 2-Based Autopilot for Unmanned Aerial Vehicles}, 
      author={Jacob Moore and Phil Tokumaru and Ian Reid and Brandon Sutherland and Joseph Ritchie and Gabe Snow and Tim McLain},
      year={2025},
      eprint={2510.00995},
      archivePrefix={arXiv},
      primaryClass={cs.RO},
      url={https://arxiv.org/abs/2510.00995}, 
}

@article{perlin1985image,
  title={An image synthesizer},
  author={Perlin, Ken},
  journal={ACM Siggraph Computer Graphics},
  volume={19},
  number={3},
  pages={287--296},
  year={1985},
  publisher={ACM New York, NY, USA}
}

\end{document}